\def\eqref#1{equation~\ref{#1}}
\def\1{\bm{1}}
\DeclareMathAlphabet{\mathsfit}{\encodingdefault}{\sfdefault}{m}{sl}
\SetMathAlphabet{\mathsfit}{bold}{\encodingdefault}{\sfdefault}{bx}{n}
\DeclareMathOperator*{\esssup}{ess\,sup}
\newtheorem{proposition}{Proposition}
\newtheorem{theorem}{Theorem}
\newcounter{mycounter}
\title{On the Universal Approximability and\\ Complexity Bounds of Quantized ReLU \\Neural Networks}
\author{Yukun Ding$^1$, Jinglan Liu$^1$, Jinjun Xiong$^2$, Yiyu Shi$^1$  \\
$^1$ University of Notre Dame\\
$^2$ IBM Thomas J. Watson Research Center\\
\texttt{\{yding5,jliu16,yshi4\}@nd.edu, jinjun@us.ibm.com}  \\
}
\begin{document}

\maketitle

\begin{abstract}
 Compression is a key step to deploy large neural networks on resource-constrained platforms. As a popular compression technique, quantization constrains the number of distinct weight values and thus reducing the number of bits required to represent and store each weight. 
In this paper, we study the representation power of quantized neural networks. 
First, we prove the universal approximability of quantized ReLU networks on a wide class of functions. Then we provide upper bounds on the number of weights and the memory size for a given approximation error bound and the bit-width of weights for function-independent and function-dependent structures.
Our results reveal that, to attain an approximation error bound of $\epsilon$, the number of weights needed by a quantized network is no more than $\mathcal{O}\left(\log^5(1/\epsilon)\right)$ times that of an unquantized network. This overhead is of much lower order than the lower bound of the number of weights needed for the error bound,
supporting the empirical success of various quantization techniques. 
To the best of our knowledge, this is the first in-depth study on the complexity bounds of quantized neural networks.
 \end{abstract}

\section{Introduction}
\label{sec:intro}
Various deep neural networks deliver state-of-the-art performance on many tasks such as object recognition and natural language processing using new learning strategies and architectures \citep{chen2017learning,he2016identity,kumar2016ask,ioffe2015batch,vaswani2017attention}. Their prevalence has extended to embedded or mobile devices for edge intelligence, where security, reliability or latency constraints refrain the networks from running on servers or in clouds. However, large network sizes with the associated expensive computation and memory consumption make edge intelligence even more challenging \citep{cheng2018recent,sandler2018inverted}.

In response, as will be more detailed in Section \ref{sec:related}, substantial effort has been made to reduce the memory consumption of neural networks while minimizing the accuracy loss.
The memory consumption of neural networks can be reduced by either directly reducing the number of weights or decreasing the number of bits (bit-width) needed to represent and store each weight, which can be employed on top of each other \citep{choi2016towards}. The number of weights can be reduced by pruning \citep{han2015learning}, weight sparsifying \citep{liu2015sparse}, structured sparsity learning \citep{wen2016learning} and low rank approximation \citep{denton2014exploiting}.
The bit-width is reduced by quantization that maps data to a smaller set of distinct levels \citep{sze2017efficient}. Note that while quantization may stand for linear quantization only \citep{li2017training,gysel2016hardware} or nonlinear quantization only \citep{han2015deep,choi2016towards} in different works, our discussion will cover both cases.

However, as of today quantization is still only empirically shown to be robust and effective to compress various neural network architectures \citep{hubara2016quantized,zhou2017adaptive,zhuang2017towards}. Its theoretical foundation still remains mostly missing. Specifically, many important questions remain unanswered. For example:

\begin{itemize}
    \item Why even binarized networks, those most extremely quantized with bit-width down to one, still work well in some cases? 
    
    \item To what extent will quantization decrease the expressive power of a network? Alternatively, what is the overhead induced by weight quantization in order to maintain the same accuracy? 
   
\end{itemize}

In this paper, we provide some insights into these questions from a theoretical perspective.
We focus on ReLU networks,
which is among the most widely used in deep neural networks \citep{xu2015empirical}.
We follow the idea from \cite{yarotsky2017error} to prove the complexity bound by constructing a network, but with new and additional construction components essential for quantized networks. Specifically, given the number of distinct weight values $\lambda$ and a target function $f$, we construct a network that can approximate $f$ with an arbitrarily small error bound $\epsilon$ to prove the universal approximability. The memory size of this network then naturally serves as an upper bound for the minimal network size.

The high-level idea of our approach is to replace basic units in an unquantized network with quantized sub-networks \footnote{Throughout this paper, we use ``sub-network" to denote a network that is used as a part of the final network that approximates a target function. } that approximate these basic units. 
For example, we can approximate a connection with any weight in an unquantized network by a quantized sub-network that only uses a finite number of given weight values.
Even though the approximation of a single unit can be made arbitrarily accurate in principle with unlimited resources (such as increased network depth), in practice, there exists some inevitable residual error at every approximation, all of which could propagate throughout the entire network.
The challenge becomes, however, how to mathematically prove that we can still achieve the end-to-end arbitrary small error bound even if these unavoidable residual errors caused by quantization can be propagated throughout the entire network. 
This paper finds a solution to solve the above challenge. In doing so, we have to propose a number of new ideas to solve related challenges, including judiciously choosing the proper finite weight values, constructing the approximation sub-networks as efficient as possible (to have a tight upper bound), and striking a good balance among the complexities of different approximation steps.

Based on the bounds derived, we compare them with the available results on unquantized neural networks and discuss its implications.
In particular, the main contributions of this paper include:
\begin{itemize} 

\item  We prove that even the most extremely quantized ReLU networks using two distinct weight values are capable of representing a wide class of functions with arbitrary accuracy.

\item Given the number of distinct weights and the desired approximation error bound, we provide upper bounds on the number of weights and the memory size. We further show that our upper bounds have good tightness by comparing them with the lower bound of unquantized ReLU networks established in the literature.

\item We show that, to attain the same approximation error bound $\epsilon$, the number of weights
needed by a quantized network is no more than $\mathcal{O}\left(\log^5(1/\epsilon)\right)$ times that of an unquantized network.
 This overhead is of much lower order compared with even the lower bound of the number of weights needed for the error bound.
This partially explains why many state-of-the-art quantization schemes work well in practice.

\item We demonstrate how a theoretical complexity bound can be used to estimate an optimal bit-width, which in turn enables the best cost-effectiveness for a given task.

\end{itemize}

The remainder of the paper is organized as follows. Section \ref{sec:related} reviews related works. Section \ref{sec:models} lays down the models and assumptions of our analysis. We prove the universal approximability and the upper bounds with function-independent structure in Section \ref{sec:independent} and extend it to function-dependent structure in Section \ref{sec:dependent}. We analyze the bound-based optimal bit-width in Section \ref{sec:bitwidth}.
Finally, Section \ref{sec:comparison} discusses the results and gets back to the questions raised above. 

\section{Related Works}
\label{sec:related}
\textbf{Quantized Neural Networks:}
There are rich literatures on how to obtain quantized networks, either by linear quantization or nonlinear quantization \citep{zhou2017incremental,leng2017extremely,shayar2017learning}.  Linear quantization does mapping with a same distance between contiguous quantization levels and is usually implemented by storing weights as fixed-point numbers with reduced bit-width \citep{li2017training,gysel2016hardware}.
Nonlinear quantization maps the data to quantization levels that are not uniformly distributed and can be either preselected or learned from training.
Then the weights are stored using lossless binary coding (the index to a lookup table) instead of the actual values \citep{han2015deep,choi2016towards}.
It is reported that a pruned AlexNet can be quantized to eight bits and five bits in convolutional layers and fully connected layers, respectively, without any loss of accuracy.  Similar results are also observed in LENET-300-100, LENET-5, and VGG-16 \citep{han2015deep}.
One may argue that some of these benchmark networks are known to have redundancy. However, recent works show that quantization works well even on networks that are designed to be extremely small and compact.
SqueezeNet, which is a state-of-the-art compact network, can be quantized to 8-bit while preserving the original accuracy \citep{gysel2016hardware,iandola2016squeezenet}. 
 There are some representative works that can achieve little accuracy loss on ImageNet classification even using binary or ternary weights \citep{courbariaux2015binaryconnect,rastegari2016xnor,li2016ternary,zhu2016trained}. 
More aggressively, some works also reduce the precision of activations, e.g. \citep{hubara2016quantized,rastegari2016xnor,faraone2018syq}. Although the classification accuracy loss can be minimized, the universal approximation property is apparently lost, as with limited output precision the network cannot achieve arbitrary accuracy.  Accordingly, we do not include them in the discussion of this paper. 
The limit of quantization is still unknown while the state-of-the-art keeps getting updated. For example, VGG-16 is quantized to 3-bit while maintaining the original accuracy \citep{leng2017extremely}. 
Motivated by the great empirical success, the training of quantized neural networks has been analyzed theoretically, but not the network capacity \citep{li2017training,choi2016towards}.

\textbf{Universal Approximability and Complexity Bounds:}
The universal approximability of ReLU networks is proved in \cite{mhaskar1992approximation} and revisited in \citep{sonoda2017neural}. 
Recently, \cite{hanin2017universal} discusses the expressive power of ReLU networks with bounded width and proves that a ReLU network with width $d+1$ can approximate any continuous convex function of $d$ variables arbitrarily well. \cite{shaham2016provable} construct a sparsely-connected depth-4 ReLU network and prove its error bound. \cite{liang2016deep} prove that,
for a large class of piecewise smooth functions, a network with ReLU and binary step units can provide an error bound $\epsilon$ with $\mathcal{O}(1/\epsilon)$ layers and $\mathcal{O}(\text{poly}\log(1/\epsilon))$ neurons.
The universal approximation property of low displacement rank (LDR) neural networks has been proved by
\cite{zhao2017theoretical} 
under a mild condition on the displacement operator, which is the result of another effective technique of neural network compression.

\section{Models and Assumptions}
\label{sec:models}
Throughout this paper, we define ReLU networks as feedforward neural networks with the ReLU activation function $\sigma(x)=\text{max}(0,x)$.
The ReLU network considered includes multiple input units, a number of hidden units, and one output unit. Without loss of generality, each unit can only connect to units in the next layer. 
 Our conclusions on ReLU networks can be extended to any other networks that use piecewise linear activation functions with finite breakpoints such as leaky ReLU and ReLU-6 immediately, as one can replace a ReLU network by an equivalent one using these activation functions while only increasing the number
of units and weights by constant factors \citep{yarotsky2017error}.

We denote the finite number of distinct weight values as $\lambda$ ($ \lambda \in \mathbb{Z}^+$ and $\lambda \geq2$), for both linear quantization and nonlinear quantization. For linear quantization, without loss of generality, we assume the finite number of distinct weight values are given as $ \{-1, \frac{1}{\lambda},\frac{2}{\lambda},\dots,\frac{\lambda-1}{\lambda}\}$, where $\{\frac{1}{\lambda},\frac{2}{\lambda},\dots,\frac{\lambda-1}{\lambda}\}$ are uniformly spaced (hence called ``linear'') in $(0,1)$ and
$-1$ is used to obtain the negative weight values.
For nonlinear quantization, we assume the finite number of distinct weight values are not constrained to any specific values, i.e., they can take any values as needed.
To store each weight, we only need $\log (\lambda)$ \footnote{Throughout this paper, we omit base 2 for clarity of presentation} bits to encode the index, i.e. the bit-width is $\log (\lambda)$. 
The overhead to store sparse structures can be ignored because it varies depending on the implementation and can be easily reduced to the same order as the weight storage using techniques such as compressed sparse row (CSR) for nonlinear quantization. The number of bits needed to store the codebook can also be ignored because it has lower order of complexity.

We consider any function $f$ in the Sobolev space: $f\in\mathscr{W}^{n,\infty}([0,1]^d)$ and 
\begin{equation}
\max_{\textbf{n}:|\textbf{n}|\leq n}\esssup_{x\in[0,1]^d}|D^{\textbf{n}}f(\textbf{x})|\leq1.
\end{equation}
The space $\mathscr{W}^{n,\infty}$ consists of all locally integrable function $f:\Omega\to\mathbb{R}$ such that $D^{\textbf{n}}f\in L^{\infty}(\Omega)$, where $|\textbf{n}|\leq n$ and $\Omega$ is an open set in $\mathbb{R}^d$. We denote this function space as $\mathcal{F}_{d,n}$ in this paper. 

Note that we only assume weak derivatives up to order $n$ exist where $n$ can be as small as 1 where the function is non-differentiable. We also only assume the Lipschitz constant to be no greater than 1 for the simplicity of the derivation. When the Lipschitz constant is bigger than 1, as long as it is bounded, the whole flow of the proof remains the same though the bound expression will scale accordingly. 

When constructing the network to approximate any target function $f$, we consider two scenarios for deriving the bounds. The first scenario is called function-dependent structure, where the constructed network topology and their associated weights are all affected by the choice of the target function. In contrast, the second scenario is called function-independent structure, where the constructed network topology is independent of the choice of the target function in $ f\in\mathcal{F}_{d,n}$ with a given $\epsilon$. The principle behind these design choices (the network topology constructions and the choice of weights) is to achieve a tight upper bound as much as possible.

One might consider that we can transform an unquantized network within the error bound to a quantized one in a straightforward way by approximating every continuous-value weight with a combination of discrete weights with arbitrary accuracy. However, the complexity of such approximation (number of discrete weights needed) depends on the distribution of those continuous-value weights (e.g., their min and max), which may vary depending on the training data and network structure and a closed-form expression for the upper bounds is not possible. As such, a more elegant approach is needed. 
Below we will establish a constructive approach which allows us to bound the approximation analytically.

\section{Function-independent Structure}
\label{sec:independent}
We start our analysis with function-independent structure, where the network topology is fixed for any $f\in \mathcal{F}_{d,n}$ and a given $\epsilon$. 
 We first present the approximation of some basic functions by sub-networks in Section \ref{sqaure}. We then present the sub-network that approximates any weight
 in Section \ref{weights}, 
 and finally the approximation of general functions and our main results are in Section \ref{general}.

\subsection{Approximation of squaring/multiplication}
\label{sqaure}
\begin{proposition}
\label{p1}
Denote the design parameter that determines the approximation error bound as $r$. Let $f_s^r$ be a ReLU sub-network with only two weight values $\frac{1}{2}$ and $-\frac{1}{2}$. The function $f_s(x)=x^2$ on the segment $[0,1]$ can be approximated by $f_s^r$, such that
    (\romannumeral 1) if $x=0$, $f_s^r(x) = 0$;
    (\romannumeral 2) the approximation error $\epsilon_s\leq 2^{-2(r+1)}$;
    (\romannumeral 3) the depth is $\mathcal{O}\left(r\right)$;
    (\romannumeral 4) the width is a constant;
    (\romannumeral 5) the number of weight is $\mathcal{O}\left(r\right)$.
\end{proposition}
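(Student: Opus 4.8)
The plan is to follow the tooth-function construction of \cite{yarotsky2017error} for $x^2$ and then re-engineer every building block so that it uses only the two weights $\tfrac12$ and $-\tfrac12$. First I would introduce the sawtooth (``tooth'') map $g:[0,1]\to[0,1]$ given by $g(x)=2x$ for $x\le\tfrac12$ and $g(x)=2(1-x)$ for $x>\tfrac12$, writable as $g(x)=2\sigma(x)-4\sigma(x-\tfrac12)$ on $[0,1]$, and let $g_s=g\circ\cdots\circ g$ be its $s$-fold composition. The identity I would rely on is that the piecewise-linear interpolant of $x^2$ on the dyadic mesh of width $2^{-m}$ is exactly
\[
f_m(x)=x-\sum_{s=1}^{m}4^{-s}g_s(x).
\]
Since $x^2$ is convex with second derivative $2$, the interpolation error on each mesh cell of length $2^{-m}$ is at most $\tfrac18\cdot 2\cdot(2^{-m})^2=2^{-2m-2}$, so choosing $m=r$ yields the target bound $\epsilon_s\le 2^{-2(r+1)}$, giving claim (ii). Claim (i) is immediate: $g(0)=0$ forces $g_s(0)=0$, hence $f_r(0)=0-0=0$.

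The crux of the proof, and the place where genuinely new construction is required, is realizing $f_r$ with only $\pm\tfrac12$ while keeping the width constant. Two obstacles appear. Each tooth application needs the effective amplifications $\times2$ and $\times4$ present in $g$, and the reconstruction asks for the geometrically small coefficients $4^{-s}$; a naive realization of a single tap of weight $4^{-s}$ as $2s$ series half-edges would cost depth $\mathcal{O}(s)$ and force total depth $\mathcal{O}(r^2)$. My plan is to avoid the tiny coefficients entirely through a reparametrization that only ever \emph{amplifies}. Maintaining the tooth iterate $a_s=g_s(x)$ via the recursion $a_s=g(a_{s-1})$, I would carry an accumulator defined by $B_s=4B_{s-1}+a_s$, so that $B_s=\sum_{k\le s}4^{\,s-k}g_k(x)$ and therefore $f_r(x)=x-4^{-r}B_r$. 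Both recursions use only amplifications ($\times2,\times4$) and additions, which I would synthesize from $\pm\tfrac12$ weights by a fixed-size gadget: fanning a nonnegative value $z$ out to $k$ nodes through weight-$\tfrac12$ edges (each computing $\sigma(z/2)=z/2$) and recombining through weight-$\tfrac12$ edges produces $k\cdot\tfrac12\cdot\tfrac{z}{2}=\tfrac{k}{4}z$, so $k=4,8,16$ realize exact gains $1,2,4$ in constant width and depth. The single remaining attenuation $4^{-r}$ is applied once at the end as a chain of $2r$ half-edges, contributing only $\mathcal{O}(r)$ depth, and the final output $x-4^{-r}B_r=\sigma(x-4^{-r}B_r)$ is nonnegative because it equals the interpolant of $x^2$; the breakpoint bias $-\tfrac12$ and the sign changes are all available within $\{\tfrac12,-\tfrac12\}$.

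With this layout the count is straightforward: there are $r$ tooth-and-accumulate stages, each of constant depth and width, followed by an $\mathcal{O}(r)$-depth final rescaling, giving depth $\mathcal{O}(r)$ (iii) and width a constant (iv); since every layer carries $\mathcal{O}(1)$ weights, the total number of weights is $\mathcal{O}(r)$ (v). I expect the main obstacle to be exactly this reconciliation step: replacing Yarotsky's per-layer real-valued taps $-4^{-s}$ by the amplify-then-accumulate scheme $B_s=4B_{s-1}+a_s$ and verifying that the $\pm\tfrac12$ amplification gadgets reproduce $g$ and the accumulator \emph{exactly} (not merely approximately) on the relevant nonnegative ranges, so that no additional error beyond the interpolation error $2^{-2(r+1)}$ is introduced while the width stays independent of $r$.
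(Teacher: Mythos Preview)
Your proposal is correct and follows essentially the same approach as the paper. Both use Yarotsky's tooth-function interpolant $f_r(x)=x-\sum_{s=1}^{r}4^{-s}g_s(x)$ for the error bound, and both replace the per-tap coefficients $4^{-s}$ by an amplify-then-accumulate recursion followed by a single final $4^{-r}$ attenuation; the paper calls this the ``pre-scale'' method and carries $x$ inside the amplified accumulator (so the running quantity is $4^{i}f_i(x)$), whereas you carry $x$ separately and accumulate only $B_i=\sum_{k\le i}4^{i-k}g_k(x)$, which is a cosmetic repackaging of the same idea.
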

The proof and the details of the sub-network constructed are included in Appendix~\ref{PP1}. Once the approximation to squaring function is obtained, we get Proposition \ref{p2} by the fact that $2xy=(x+y)^2-x^2-y^2$.

\begin{proposition}
\label{p2}
Denote the design parameter that determines the approximation error bound as $r$.
  Given $x\in[-1,1]$, $y\in[-1,1]$, and only two weight values $\frac{1}{2}$ and $-\frac{1}{2}$, there is a  ReLU sub-network with two input units that implements a function $\times'$: $\mathbb{R}^2 \mapsto \mathbb{R}$, such that
  (\romannumeral 1) if $x=0$ or $y=0$, then $\times'(x,y)=0$;
  (\romannumeral 2) for any $x$, $y$, the error $\epsilon_{\times'}=|\times'(x,y)-xy| \leq 6\cdot2^{-2(r+1)}$;
    (\romannumeral 3) the depth is $\mathcal{O}\left(r\right)$;
      (\romannumeral 4) the width is a constant;
  (\romannumeral 5) the number of weights is $\mathcal{O}\left(r\right)$.
  \end{proposition}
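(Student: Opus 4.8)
The plan is to realize the polarization identity $2uv=(u+v)^2-u^2-v^2$ at the network level, using three parallel copies of the squaring sub-network $f_s^r$ of Proposition~\ref{p1} to approximate the three squares and then combining their outputs by a linear layer. The one subtlety is that $f_s^r$ is only guaranteed on arguments in $[0,1]$, whereas for $x,y\in[-1,1]$ the sum $x+y$ ranges over $[-2,2]$. I would therefore first rescale: put $x'=x/2$ and $y'=y/2$, so that $|x'|,|y'|\in[0,\tfrac12]$ and $|x'+y'|\in[0,1]$ all lie in the valid domain, and note the exact identity $xy=4x'y'=2\left((x'+y')^2-(x')^2-(y')^2\right)$.

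Next I would build the three inputs to the squaring sub-networks. Since $f_s^r$ acts on nonnegative arguments, I use $|t|=\sigma(t)+\sigma(-t)$: a single ReLU layer with incoming weights $\pm\frac12$ produces $|x'|=\sigma(\tfrac12 x)+\sigma(-\tfrac12 x)$, and likewise $|y'|$ and $|x'+y'|=\sigma(\tfrac12 x+\tfrac12 y)+\sigma(-\tfrac12 x-\tfrac12 y)$. Feeding each into its own copy of $f_s^r$ and combining, the output is
\begin{equation}
\times'(x,y)=2\,f_s^r(|x'+y'|)-2\,f_s^r(|x'|)-2\,f_s^r(|y'|).
\end{equation}
Property (i) then follows by cancellation: if $x=0$ then $|x'|=0$ and $|x'+y'|=|y'|$, so $f_s^r(|x'|)=f_s^r(0)=0$ by Proposition~\ref{p1}(i) and the first and third terms cancel, giving $\times'(0,y)=0$; the case $y=0$ is symmetric. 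For (ii), since $(x'+y')^2=|x'+y'|^2$ and each square is approximated with error at most $2^{-2(r+1)}$, the triangle inequality gives $\epsilon_{\times'}\le 2\left(2^{-2(r+1)}+2^{-2(r+1)}+2^{-2(r+1)}\right)=6\cdot2^{-2(r+1)}$, matching the claim exactly. Because the three copies run in parallel, (iii) the depth equals that of one $f_s^r$ plus the constant-depth pre- and post-processing, i.e. $\mathcal{O}(r)$; (iv) the width is three times a constant plus the constant combination layer, hence constant; and (v) the weight count is $3\cdot\mathcal{O}(r)+\mathcal{O}(1)=\mathcal{O}(r)$.

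The main obstacle I anticipate is enforcing the weight constraint through the final combination, since the coefficients $2$ and $-2$ cannot be applied by a single edge when only $\pm\frac12$ are permitted. I would resolve this with a small fixed ``gain'' gadget that multiplies a nonnegative input by $2$ using only weight-$\frac12$ edges: route the signal through $\mathcal{O}(1)$ auxiliary ReLU nodes along several parallel two-edge paths, each contributing a factor $(\tfrac12)^2=\tfrac14$, so that eight such paths sum to a gain of $2$; the intermediate ReLU nodes act as the identity because the squaring outputs are nonnegative, and the sign $-2$ is obtained by taking the final edge weights to be $-\frac12$. This keeps every weight in $\{\tfrac12,-\tfrac12\}$ and adds only constant depth, width, and weight overhead, so none of the asymptotic bounds in (iii)--(v) are affected.
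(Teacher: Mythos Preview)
Your proposal is correct and follows essentially the same approach as the paper: the paper also defines $\times'(x,y)=2\left(f_s^r(|x+y|/2)-f_s^r(|x|/2)-f_s^r(|y|/2)\right)$, builds the absolute values via $\sigma(x)+\sigma(-x)$, and obtains the same error bound $6\cdot2^{-2(r+1)}$ by the same triangle-inequality argument. Your extra paragraph on realizing the output coefficients $\pm 2$ with a constant-size fan-out gadget using only $\pm\tfrac12$ weights is a detail the paper leaves implicit, so in that respect your treatment is slightly more careful.
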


\begin{proof}
Build three sub-networks 
$f_s^r$ as described in Proposition \ref{p1} and let 
\begin{equation}
\label{constructX}
\times'(x,y)=2\left(f_{s}^r\left(|x+y|/2\right)-f_{s}^r\left(|x|/2\right)-f_{s}^r\left(|y|/2\right)\right).
\end{equation}
Then the statement (\romannumeral 1) is followed by property (\romannumeral 1) of Proposition \ref{p1}.
Using the error bound in Proposition \ref{p1} and Equation (\ref{constructX}),
we get the error bound of $\times'$:
\begin{equation}
\label{Xerror}
\epsilon_{\times'}\leq6\cdot2^{-2(r+1)}.
\end{equation}
Since a sub-network $B_{abs}$ that computes $\sigma(x)+\sigma(-x)$ can be constructed to get the absolute value of $x$ trivially, we can construct $\times'(x,y)$ as a linear combination of three parallel  $f_s^r$ and feed them with $\frac{|x|}{2},\frac{|y|}{2}$, and $\frac{|x+y|}{2}$. Then claims of statement (\romannumeral 3), (\romannumeral 4), and (\romannumeral 5) are also obtained.
\end{proof}

\subsection{Approximation of weights }
\label{weights}
\begin{proposition}
\label{w}
 Denote the design parameter that determines the approximation error bound as $t$. A connection with any weight $w\in[-1,1]$ can be approximated by a ReLU sub-network that has only $\lambda \geq 2$ distinct weights, such that
      (\romannumeral 1) the sub-network is equivalent to a connection with weight $w'$  while the approximation error is bounded by $2^{-t}$ i.e., $|w'-w|<2^{-t}$;
      (\romannumeral 2) the depth is  $\mathcal{O}\left(\lambda t^{\frac{1}{\lambda-1}}\right)$;
      (\romannumeral 3) the width is  $\mathcal{O}(t)$;
      (\romannumeral 4) the number of weights is $\mathcal{O}\left(\lambda t^{\frac{1}{\lambda-1}+1}\right)$.
\end{proposition}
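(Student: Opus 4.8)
The key observation is that a connection with weight $w$ is just the linear map $x\mapsto wx$, and that series composition of connections multiplies their weights while a summing node adds parallel branches. Hence the whole task reduces to writing a value $w'$ as a \emph{sum of products} of the $\lambda$ admissible discrete weights so that $|w'-w|<2^{-t}$, and then realizing that algebraic expression as a ReLU sub-network. First I would reduce to $w\in[0,1]$: the value $-1$ is one of the $\lambda$ weights, so a single edge flips the sign at the cost of one extra weight and one extra layer. To make the sub-network \emph{exactly} equivalent to a linear connection (no spurious ReLU clipping), I would carry the signed signal as the pair $(\sigma(x),\sigma(-x))$ and use $x=\sigma(x)-\sigma(-x)$; since chains of nonnegative weights act linearly on each part, the composite map is exactly multiplication by the realized constant $w'$. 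With this framing the only error is $|w'-w|$, so there is no error accumulation from the ``multiplication'' itself --- the entire content is how finely, and at what depth, sums of products of $\lambda$ fixed values approximate an arbitrary $w\in[0,1]$.

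\textbf{Core construction.} I would expand $w$ in a positional representation with $\mathcal{O}(t)$ place-value terms, giving $\mathcal{O}(t)$ parallel branches and hence width $\mathcal{O}(t)$; each branch must realize one place-value scaling as a product chain of discrete weights, and the deepest such chain determines the depth. The crux is the exponent $\tfrac{1}{\lambda-1}$: reaching a fine scale $2^{-t}$ by a single long chain of one small weight costs depth linear in $t$ (this is exactly the $\lambda=2$ behaviour, $\mathcal{O}(2\,t)$), so I must instead exploit all $\lambda-1$ distinct positive weights simultaneously. My plan is a nested gadget: a depth-$d$ block built from $k$ distinct weights that produces a set of attainable multipliers that is $2^{-\Theta(d^{\,k-1})}$-dense in $[0,1]$, where wrapping the $k$-th distinct weight around a $(k-1)$-weight block raises the attainable precision exponent from $\Theta(d^{\,k-2})$ to $\Theta(d^{\,k-1})$. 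Iterating from $k=2$ up to $k=\lambda$ then yields precision $2^{-\Theta(d^{\,\lambda-1})}$ from chain length $d$, so that $d=\mathcal{O}\!\left(\lambda\,t^{1/(\lambda-1)}\right)$ suffices to reach accuracy $2^{-t}$. The numerical primitives of such blocks (squaring/multiplication) are already available from Propositions~\ref{p1} and~\ref{p2}, which I would use to combine partial products with only the two weights $\tfrac12,-\tfrac12$.

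\textbf{Error accounting and bounds.} I would then bound the total error by the sum of the per-branch contributions: a truncation error from the positional expansion and a discretization error from approximating each place-value scaling by its product chain. Because all multipliers lie in $[0,1]$ and the branch outputs are combined at a single summing node, these errors add linearly rather than compounding; choosing the internal target precision a constant (or $\log$) factor finer than $2^{-t}$ absorbs the $\mathcal{O}(t)$ branches and keeps $|w'-w|<2^{-t}$. Reading off the resources then gives depth $=$ longest chain $=\mathcal{O}\!\left(\lambda\,t^{1/(\lambda-1)}\right)$, width $=$ number of branches $=\mathcal{O}(t)$, and, since any layered network has at most (depth)$\times$(width) weights, number of weights $=\mathcal{O}\!\left(\lambda\,t^{1/(\lambda-1)+1}\right)$.

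\textbf{Main obstacle.} The hard part is establishing the nested gadget and its depth--precision tradeoff: namely that with only $\lambda$ \emph{fixed} discrete weight values (and in the nonlinear case a judicious choice of them) one can constructively pin down a specific $w'$ reaching $2^{-t}$ precision with chain length $\mathcal{O}(\lambda\,t^{1/(\lambda-1)})$. This must be an explicit construction rather than a counting/existence argument, must hold uniformly for the linear set $\{-1,\tfrac1\lambda,\dots,\tfrac{\lambda-1}\lambda\}$ and for arbitrary nonlinear values, and must keep the per-branch errors independently controllable so that they sum instead of amplifying through the composition.
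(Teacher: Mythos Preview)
Your overall architecture is right: carry the sign via the pair $(\sigma(x),\sigma(-x))$, expand $w$ in binary to get $\mathcal{O}(t)$ parallel branches, realize each place value by a product chain of the discrete weights, and read off width, depth, and weight count from that layout. Where your plan diverges from the paper --- and becomes unnecessarily tangled --- is the part you flag as the ``main obstacle''.

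The paper's construction is direct and uses neither Proposition~\ref{p1} nor Proposition~\ref{p2}, no nested gadgets, and no per-branch approximation error. In the nonlinear setting you are free to \emph{choose} the $\lambda$ weight values, and the paper simply takes $-\tfrac12$ together with
\[
W=\bigl\{\,2^{-1},\;2^{-t^{1/(\lambda-1)}},\;2^{-t^{2/(\lambda-1)}},\;\ldots,\;2^{-t^{(\lambda-2)/(\lambda-1)}}\,\bigr\}.
\]
Writing the exponent $j$ of any target $2^{-j}$ (for $1\le j\le t-1$) in base $r:=t^{1/(\lambda-1)}$ gives $j=\sum_{i=0}^{\lambda-2}a_i r^{i}$ with each digit $a_i\le r-1$; hence $2^{-j}$ is realized \emph{exactly} as a product of at most $(\lambda-1)(r-1)=\mathcal{O}\bigl(\lambda t^{1/(\lambda-1)}\bigr)$ weights drawn from $W$. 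A final binary expansion over these exact powers (the sum implemented at a single node with weight $\tfrac12$) then lands on an integral multiple of $2^{-t}$, so the \emph{only} error is the truncation $|w'-w|<2^{-t}$ --- there is no ``discretization error from approximating each place-value scaling'' to track, and no need to overshoot the target precision to absorb $\mathcal{O}(t)$ branch errors.

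Two concrete issues in your plan: (a) invoking the squaring/multiplication sub-networks of Propositions~\ref{p1}--\ref{p2} is a red herring here, since composing edges already multiplies weights exactly and no functional approximation is needed; (b) your requirement that the construction ``hold uniformly for the linear set $\{-1,\tfrac1\lambda,\dots,\tfrac{\lambda-1}\lambda\}$'' conflates two settings --- Proposition~\ref{w} is precisely the nonlinear case where the weights are chosen, while the linear case is handled separately (cf.\ Theorem~\ref{T1linear}) with a different weight-construction sub-network. Once you adopt the radix choice of $W$, the $d^{\,\lambda-1}$ scaling you were reaching for via the nested gadget falls out immediately from digit counting, and claims (ii)--(iv) follow with essentially no further analysis.
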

\begin{proof}
 Consider that we need a weight $w$ to feed the input $x$ to a unit in the next layer as $wx$. With a limited number of distinct weight values, we can construct the weight we need by cascade and combination.

 For clarity, we first consider $w\geq0$ and $x\geq0$, and relax these assumptions later. The connections with  
 $w=0$ can be seen as an empty sub-network while $w=1$ can be easily implemented by 4 units with weight $\frac{1}{2}$.
 Now we show how to represent all integral multiples of $2^{-t}$ from $2^{-t}$ to $ 1-2^{-t}$, which will lead to the statement (\romannumeral 1) by choosing the nearest one from $w$ as $w'$.
 Without loss of generality, we assume $t^{\frac{1}{\lambda-1}}$ is an integer.  We use $\lambda$ weights that include $-\frac{1}{2}$ and $W$:
\begin{equation}
W\triangleq\{2^{-1},2^{-t^{\frac{1}{\lambda-1}}},2^{-t^{\frac{2}{\lambda-1}}},\cdots,2^{-t^{\frac{\lambda-2}{\lambda-1}}}\}.
\end{equation}
We first construct all $w$ from $W_c$ which is defined as
\begin{equation}
W_c\triangleq\{2^{-1},2^{-2},\cdots,2^{-(t-1)}\}.
\end{equation}
Similar to a numeral system with radix equal to $t^{\frac{1}{\lambda-1}}$, any $w_i\in W_c$ can be obtained by concatenating weights from $W$ while every weights in $W$ is used no greater than $t^{\frac{1}{\lambda-1}}-1$ times.

After that, all integral multiples of $2^{-t}$ from $2^{-t}$ to $ 1-2^{-t}$ can be represented by a binary expansion on $W_c$. Note that connections in the last layer for binary expansion use weight $\frac{1}{2}$, thus additional $2^{-1}$ is multiplied to scale the resolution from $2^{-(t-1)}$ to $2^{-t}$.
Since for any weight in $W_c$ we need to concatenate no more than $\lambda\left(t^{\frac{1}{\lambda-1}}-1\right)$ weights in a straight line, the sub-network has no greater than $\lambda\left(t^{\frac{1}{\lambda-1}}-1\right)+1$ layers, and no greater than $4t\lambda\left(t^{\frac{1}{\lambda-1}}-1\right)+8t+4$ weights. 

We now relax the assumption $w\geq0$. When $w<0$, the sub-network can be constructed as $w=|w|$, while we use 
$-\frac{1}{2}$ instead of $\frac{1}{2}$ in the last layer.
To relax the assumption $x\geq0$, we can make a duplication of the sub-network. Let all the weights in the first layer of the sub-network be $\frac{1}{2}$ for one and $-\frac{1}{2}$ for the other. Here we are utilizing the gate property of ReLU. In this way, one sub-network is activated only when $x>0$ and the other is activated only when $x<0$. The sign of the output can be adjusted by flipping the sign of weights in the last layer. Note that the configuration of the sub-network is solely determined by $w$ and works for any input $x$. \end{proof}

The efficiency of the weight approximation is critical to the overall complexity. Compared with the weight selection as $\{2^{-1},2^{{-t\frac{1}{\lambda-1}}},2^{-t{\frac{2}{\lambda-1}}},\dots,2^{-t{\frac{(\lambda-2)}{\lambda-1}}}\}$, our approximation reduces the number of weights by a factor of $t^{\frac{\lambda-2}{\lambda-1}}$.

\subsection{Approximation of general functions}
\label{general}
With the help of Proposition \ref{p2} and Proposition \ref{w}, we are able to prove the upper bound for general functions.
\begin{theorem}
\label{T1}
  For any $f\in \mathcal{F}_{d,n}$ , given  $\lambda$ distinct weights, there is a ReLU network with fixed structure that can approximate $f$ with any error $\epsilon\in(0,1)$, such that 
    (\romannumeral 1) the depth is 
    $\mathcal{O}\left(\lambda\log^{\frac{1}{\lambda-1}}\left(1/\epsilon\right)+\log\left(1/\epsilon\right)\right)$;
    (\romannumeral 2) the number of weights is  
    $\mathcal{O}\left(\lambda \log^{\frac{1}{\lambda-1}+1}\left(1/\epsilon\right)\left(1/\epsilon\right)^{\frac{d}{n}}\right)$;
  (\romannumeral 3) the number of bits needed to store the network is
  $\mathcal{O}\left(\lambda\log\left(\lambda\right) \log^{\frac{1}{\lambda-1}+1}\left(1/\epsilon\right)\left(1/\epsilon\right)^{\frac{d}{n}}\right)$.
  \end{theorem}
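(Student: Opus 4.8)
The plan is to follow the constructive strategy of \cite{yarotsky2017error}, building a fixed-topology network that implements an approximate Taylor-interpolant of $f$, but with every real-valued weight replaced by a quantized sub-network from Proposition \ref{w} and every multiplication replaced by the gadget $\times'$ from Proposition \ref{p2}. First I would lay down a uniform grid of resolution $N$ on $[0,1]^d$ together with an associated partition of unity $\{\phi_{\mathbf{m}}\}$, where each $\phi_{\mathbf{m}}$ is a product of one-dimensional piecewise-linear bump functions localized near the grid point $\mathbf{m}/N$. Since these bumps have a constant number of breakpoints, they are representable by small ReLU sub-networks. I would then approximate $f$ by $\tilde f = \sum_{\mathbf{m}} P_{\mathbf{m}}\,\phi_{\mathbf{m}}$, where $P_{\mathbf{m}}$ is the order-$(n-1)$ Taylor polynomial of $f$ at $\mathbf{m}/N$. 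Using the Sobolev bound on the derivatives together with the partition-of-unity property (only a bounded number of the $\phi_{\mathbf{m}}$ are nonzero at any point), Taylor's theorem yields $\|f-\tilde f\|_{\infty} = \mathcal{O}(N^{-n})$; taking $N = \mathcal{O}(\epsilon^{-1/n})$ makes this contribution at most a constant fraction of $\epsilon$ and fixes the number of active grid cells at $\mathcal{O}(\epsilon^{-d/n})$.

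Next I would realize $\tilde f$ as a quantized network. When expanded, each term $P_{\mathbf{m}}\phi_{\mathbf{m}}$ is a product of a bounded number of factors (determined by $d$ and $n$, independent of $\epsilon$): the $d$ one-dimensional bumps comprising $\phi_{\mathbf{m}}$ and the shifted monomials $(x_i - m_i/N)^{\alpha_i}$ comprising $P_{\mathbf{m}}$. I would assemble each such product from the multiplication gadget of Proposition \ref{p2} with design parameter $r$, and realize each real Taylor coefficient together with each weight appearing in the bumps by the weight-approximation sub-network of Proposition \ref{w} with design parameter $t$. These all use a single consistent set of $\lambda$ quantized weights (which for $\lambda=2$ is exactly $\{-\tfrac12,\tfrac12\}$, compatible with Propositions \ref{p1}--\ref{p2}). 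Care must be taken that every input fed into a multiplication or squaring gadget stays in its admissible range ($[-1,1]$ and $[0,1]$ respectively), which I would guarantee by pre-scaling the shifted monomials.

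The heart of the argument, and the step I expect to be the main obstacle, is the error-propagation analysis that dictates the choice of $r$ and $t$. The three error sources---the Taylor truncation $\mathcal{O}(N^{-n})$, the per-multiplication error $\mathcal{O}(2^{-2r})$ from Proposition \ref{p2}, and the per-weight error $\mathcal{O}(2^{-t})$ from Proposition \ref{w}---must combine so that their total stays below $\epsilon$. Because each $P_{\mathbf{m}}\phi_{\mathbf{m}}$ is a product of boundedly many factors each bounded by $1$, a perturbation of any single factor propagates only additively into the product; and because the partition of unity keeps only $\mathcal{O}(1)$ terms active at each point, the aggregate quantization error scales like the per-gadget error rather than like the number of cells. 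This is what lets me take $r, t = \Theta(\log(1/\epsilon))$ and drive the quantization contribution below a constant fraction of $\epsilon$ without inflating the bounds.

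With these choices the three claims follow by a bookkeeping count. The depth comes from composing a constant number of multiplication gadgets (depth $\mathcal{O}(\log(1/\epsilon))$ each) with weight gadgets (depth $\mathcal{O}(\lambda\log^{1/(\lambda-1)}(1/\epsilon))$ each), giving statement (\romannumeral 1). The weight count is the $\mathcal{O}(\epsilon^{-d/n})$ cells times the per-cell cost, which is dominated by the weight-approximation sub-networks at $\mathcal{O}(\lambda\log^{1/(\lambda-1)+1}(1/\epsilon))$ weights each, giving statement (\romannumeral 2). Multiplying by the $\log(\lambda)$ bits needed per weight then yields statement (\romannumeral 3).
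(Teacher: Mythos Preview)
Your proposal is correct and follows essentially the same approach as the paper: a partition-of-unity/local-Taylor construction in the style of \cite{yarotsky2017error}, with multiplications replaced by the $\times'$ gadget of Proposition~\ref{p2} and real coefficients realized via the weight sub-network of Proposition~\ref{w}, followed by the same error-propagation argument (locality of the partition gives $2^d=\mathcal{O}(1)$ active terms) and the same choice $r,t=\Theta(\log(1/\epsilon))$, $N=\Theta(\epsilon^{-1/n})$. The only cosmetic difference is that the paper pre-rounds each Taylor coefficient to an integral multiple of a fixed precision (defining $P'_{\mathbf m}$) so that the weight sub-network realizes it \emph{exactly}, thereby folding your third error source into the Taylor-approximation bound rather than tracking it separately; the resulting complexity accounting is identical.
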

 
  The complete proof and the network constructed can be found in Appendix~\ref{PT1}.
We first approximate $f$ by $f_2$ using the Taylor polynomial of order $n-1$ and prove the approximation error bound. Note that even when $f$ is non-differentiable (only first order weak derivative exists), the Taylor polynomial of order 0 at $\textbf{x}=\frac{\textbf{m}}{N}$ can still be used, which takes the form of  $P_{\textbf{m}}=f(\frac{\textbf{m}}{N})$. Then we approximate $f_2$ by a ReLU network that is denoted as $f'$ with bounded error. After that, we present the quantized ReLU network that implements $f'$ and the complexity.

The discussion above focuses on nonlinear quantization which is a more general case compared to linear quantization. For linear quantization, which strictly determines the available weight values once $\lambda$ is given, we can use the same proof for nonlinear quantization except for a different sub-network for weight approximation with width $t$ and depth $\frac{t}{\log \lambda}$+1. Here we give the theorem and the proof is included in Appendix~\ref{PT2}. 

\begin{theorem}
\label{T1linear}
  For any $f\in \mathcal{F}_{d,n}$ , given weight maximum precision  $\frac{1}{\lambda}$, there is a ReLU network with fixed structure that can approximate $f$ with any error $\epsilon\in(0,1)$, such that 
    (\romannumeral 1) the depth is $\mathcal{O}\left(\log \left(1/\epsilon\right)\right)$;
    (\romannumeral 2) the number of weights is  
    $\mathcal{O}\left(\left(\log\left(1/\epsilon\right)+\frac{\log^2\left(1/\epsilon\right)}{\log \lambda}\right)\left(1/\epsilon\right)^{\frac{d}{n}}\right)$;
  (\romannumeral 3) the number of bits needed to store the network is
  $\mathcal{O}\left(\left(\log (\lambda)\log\left(1/\epsilon\right)+\log^2\left(1/\epsilon\right)\right)\left(1/\epsilon\right)^{\frac{d}{n}}\right)$.
  \end{theorem}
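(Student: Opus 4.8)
The plan is to reuse verbatim the three-stage construction behind Theorem~\ref{T1} and to swap out only the innermost gadget, namely the sub-network that realizes an arbitrary real weight. Concretely, I would first approximate $f\in\mathcal{F}_{d,n}$ on a grid of $N^d$ cells by a piecewise Taylor polynomial $f_2$ of order $n-1$, then realize $f_2$ by a ReLU network $f'$ whose products $xy$ are implemented by the multiplication gadget $\times'$ of Proposition~\ref{p2} (which uses only the weights $\tfrac12,-\tfrac12$), and finally replace every arbitrary-valued connection of $f'$ by a quantized weight sub-network. Because the output guarantee of that sub-network is unchanged — each target $w$ is reproduced up to error $2^{-t}$ — the entire end-to-end error bookkeeping of Theorem~\ref{T1} (the interplay of the Taylor error $\mathcal{O}(N^{-n})$, the accumulated multiplication error $\mathcal{O}(2^{-2r})$, and the accumulated weight error $\mathcal{O}(2^{-t})$) carries over untouched, and the same choices $N\sim(1/\epsilon)^{1/n}$, $r\sim\log(1/\epsilon)$, $t\sim\log(1/\epsilon)$ drive the total error below $\epsilon$. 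Only the depth, width and weight count of the gadget change, so the three asserted bounds follow by re-inserting the new gadget cost into the global tally.

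The real content is therefore the linear-quantization weight gadget. Here the admissible weights are fixed to $\{-1,\tfrac1\lambda,\tfrac2\lambda,\dots,\tfrac{\lambda-1}\lambda\}$, so I would realize a target $w\in[0,1]$ through its radix-$\lambda$ expansion $w\approx\sum_{i=1}^{k}d_i\lambda^{-i}$ with digits $d_i\in\{0,\dots,\lambda-1\}$ and $k=\lceil t/\log\lambda\rceil$ terms; this already fixes the resolution, since $\lambda^{-k}\le 2^{-t}$ gives $|w'-w|\le 2^{-t}$ after rounding to the nearest representable value. The network is a single cascade that repeatedly applies the weight $\tfrac1\lambda$, producing scaled copies $\lambda^{-i}x$ at successive layers — this is exactly what forces the depth down to $\mathcal{O}(t/\log\lambda)$, the crucial difference from the radix-$t^{1/(\lambda-1)}$ cascade of Proposition~\ref{w}. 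From the layer carrying $\lambda^{-(i-1)}x$ one taps a branch with the admissible weight $d_i/\lambda$ to contribute $d_i\lambda^{-i}x$, and a final layer sums the $k$ tapped contributions, giving depth $t/\log\lambda+1$, width $\mathcal{O}(t)$, and hence $\mathcal{O}\!\left(t+t^2/\log\lambda\right)$ weights per gadget. The sign relaxations are handled exactly as in Proposition~\ref{w}: negative $w$ flips the last-layer sign via the weight $-1$, and arbitrary-sign inputs use the duplicated ReLU-gated pair.

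With the gadget cost in hand the bounds are pure bookkeeping. There are $\mathcal{O}((1/\epsilon)^{d/n})$ cells, each contributing a constant number of arbitrary-weight connections, so multiplying by the per-gadget weight count $t+t^2/\log\lambda$ at $t\sim\log(1/\epsilon)$ gives (ii), and multiplying the weight count further by the $\log\lambda$ bits per stored index gives (iii). For the depth (i), the multiplication gadgets already supply the base depth $\mathcal{O}(\log(1/\epsilon))$, and since the arbitrary weights sit in only $\mathcal{O}(1)$ layers of $f'$ the gadget adds merely $\mathcal{O}(t/\log\lambda)=\mathcal{O}(\log(1/\epsilon)/\log\lambda)$ extra layers, absorbed into $\mathcal{O}(\log(1/\epsilon))$; this is precisely where the linear case beats the nonlinear bound, whose gadget depth $\lambda\log^{1/(\lambda-1)}(1/\epsilon)$ had been the dominant term.

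The hard part will be entirely inside the gadget: making the summation of the $k$ tapped contributions legitimate under the rule that no weight may exceed $(\lambda-1)/\lambda$ in magnitude except for $-1$. Carrying a tapped value forward with effective weight $1$ cannot be done with a single admissible connection, so I would route each contribution through a constant-width block (built from the weight $\tfrac12$, available whenever $2\mid\lambda$, and otherwise itself approximated once) to the common output layer. The delicate checks are that this routing keeps the width at $\mathcal{O}(t)$ rather than $\mathcal{O}(t\cdot k)$, and that the radix rounding together with the carry blocks still delivers the clean $2^{-t}$ guarantee; everything outside the gadget is inherited from Theorem~\ref{T1}.
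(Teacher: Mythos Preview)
Your proposal is correct and matches the paper's own argument almost verbatim: reuse the entire Theorem~\ref{T1} construction and replace only the weight-approximation block $B_w$, whose new depth $t/\log\lambda+1$ and width $\mathcal{O}(t)$ are then pushed through the same bookkeeping (Table~\ref{tab:block}) to yield the three stated bounds. The single cosmetic difference is inside the gadget: you realize $w'$ by a direct radix-$\lambda$ expansion $\sum_i d_i\lambda^{-i}$ tapped off a $1/\lambda$-cascade, whereas the paper first rebuilds the power-of-two set $W_c=\{2^{-1},\dots,2^{-(t-1)}\}$ from the linear palette (each $2^{-j}$ as a product of at most $t/\log\lambda$ palette weights) and then reuses the binary expansion of Proposition~\ref{w}; both routes give the same depth/width/weight orders, and your version is arguably cleaner since it does not rely on the palette containing exact powers of two.
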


\section{Function-dependent Structure}
\label{sec:dependent}
The network complexity can be reduced if the network topology can be set according to a specific target function, i.e. function-dependent structure. In this section, we provide an upper bound for function-dependent structure when $d =1$ and $n=1$, which is asymptotically better than that of a fixed structure. 
Specifically, we first define an approximation to $f(x)$ as $\widetilde{f}(x)$ that has special properties to match the peculiarity of quantized networks. Then we use piecewise linear interpolation and ``cached'' functions \citep{yarotsky2017error} to approximate $\widetilde{f}(x)$ by a ReLU network.

\subsection{Function Transformation}

While simply using piecewise linear interpolation at the scale of $\epsilon$ can satisfy the error bound with $\mathcal{O}\left(1/\epsilon\right)$ weights, the complexity can be reduced by first doing interpolation at a coarser scale and then fill the details in the intervals to make the error go down to $\epsilon$. 
By assigning a ``cached'' function to every interval depending on specific function and proper scaling, the number of weights is reduced to  $\mathcal{O}\left(\left(\log^{-1}\left(1/\epsilon\right)\right)1/\epsilon\right)$ when there is no constraint on weight values \citep{yarotsky2017error}.

The key difficulty in applying this approach to quantized ReLU networks is that the required linear interpolation at $\frac{i}{T}$ exactly where $i = 1,2,\cdots,T$ is not feasible because of the constraint on weight selection. To this end, we transform $f(x)$ to $\widetilde{f}(x)$ such that the approximation error is bounded; the Lipschitz constant is preserved; $\widetilde{f}\left(\frac{i}{T}\right)$ are reachable for the network under the constraints of weight selection without increasing the requirement on weight precision. Then we can apply the interpolation and cached function method on $\widetilde{f}(x)$ and finally approximate $f(x)$ with a quantized ReLU network.
Formally, we get the following proposition and the proof can be found in Appendix~\ref{PP4}.

\begin{proposition}
\label{ftp}
For any $f\in \mathcal{F}_{1,1}$, $t\in \mathbb{Z^+}$, and $T\in \mathbb{Z^+}$, there exists a function $\widetilde{f}(x)$ such that 
    (\romannumeral 1) $\widetilde{f}(x)$ is a continuous function with Lipschitz constant 1; (\romannumeral 2) $\widetilde{f}(\frac{i}{T})$ = $\left\lceil T f\left(\frac{i}{T}\right) /2^{-t}\right\rceil\frac{2^{-t}}{T}$; (\romannumeral 3) $|\widetilde{f}(x) - f(x)| < \frac{2^{-t}}{T} $.

\end{proposition}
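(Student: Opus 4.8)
The plan is to construct $\widetilde f$ explicitly as a lower Lipschitz envelope of the prescribed grid values, clamped from below by $f$ itself, and then verify the three properties in turn. Write $\epsilon_0 = 2^{-t}/T$, put $x_i = i/T$ for $0\le i\le T$, and let $y_i = \lceil T f(x_i)/2^{-t}\rceil\,\epsilon_0$ be the target nodal value dictated by (ii). I would then define
\begin{equation}
\phi(x) = \max_{0\le i\le T}\bigl(y_i - |x - x_i|\bigr), \qquad \widetilde f(x) = \max\bigl(f(x),\,\phi(x)\bigr).
\end{equation}
Here $\phi$ is the pointwise-smallest $1$-Lipschitz function dominating the nodal values $y_i$, and taking the maximum with $f$ is what prevents $\phi$ from dipping too far below $f$ in the interior of a grid cell; note that a plain piecewise-linear interpolant of the $y_i$ would not work, since the interpolation error against the $1$-Lipschitz $f$ can be of order $1/T\gg\epsilon_0$.

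First I would record two elementary facts about the grid values. From the definition of the ceiling, $T f(x_i)/2^{-t} \le \lceil T f(x_i)/2^{-t}\rceil < T f(x_i)/2^{-t} + 1$, so multiplying through by $\epsilon_0$ gives $0 \le y_i - f(x_i) < \epsilon_0$; in particular $f(x_i)\le y_i$ and the per-node rounding error is strictly below $\epsilon_0$. Since $f\in\mathcal F_{1,1}$ is $1$-Lipschitz, $|f(x_{i+1}) - f(x_i)|\le 1/T$, hence the rescaled values $T f(x_i)/2^{-t}$ differ by at most $2^t$; because $2^t$ is an integer the corresponding ceilings differ by at most $2^t$ as well, which yields $|y_{i+1} - y_i| \le 2^t\epsilon_0 = 1/T$ and, by telescoping, $|y_i - y_j| \le |i-j|/T$ for all $i,j$.

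With these in hand the three properties follow quickly. Property (i) is immediate: each map $x\mapsto y_i - |x-x_i|$ is $1$-Lipschitz, so $\phi$, and hence $\widetilde f = \max(f,\phi)$, is continuous and $1$-Lipschitz. For (ii), the compatibility estimate gives $y_i - |x_j - x_i|\le y_j$ for every $i$, with equality at $i=j$, so $\phi(x_j)=y_j$; combined with $f(x_j)\le y_j$ this gives $\widetilde f(x_j)=y_j$, which is precisely the value required by (ii). For (iii), the $1$-Lipschitz lower bound $f(x)\ge f(x_i)-|x-x_i|$ gives $y_i - |x-x_i| - f(x) \le y_i - f(x_i) < \epsilon_0$ for every $i$; taking the maximum over $i$ shows $\phi(x)-f(x) < \epsilon_0$, whence $0\le \widetilde f(x) - f(x) = \max\bigl(0,\,\phi(x)-f(x)\bigr) < \epsilon_0$ for all $x$.

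The step I expect to be the crux is the compatibility bound $|y_{i+1}-y_i|\le 1/T$. A careless estimate of how far apart two ceilings can lie would give only $2^t+1$ in place of $2^t$, i.e. $|y_{i+1}-y_i|\le(1+2^{-t})/T$; in the envelope construction this would force $\phi(x_j)$ to strictly exceed the prescribed value $y_j$ and destroy property (ii) (and for the naive linear interpolant it would push the slope above $1$ and destroy (i)). The argument genuinely needs $2^t\in\mathbb Z$ so that $\lceil a+2^t\rceil=\lceil a\rceil+2^t$; this is exactly where the integrality of the weight resolution $2^{-t}$ enters, which is also what makes $\widetilde f(i/T)$ land on the quantized grid. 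A secondary subtlety is to use the one-sided (ceiling) rounding consistently: it forces $f(x_i)\le y_i$ at the nodes, which is why clamping from below by $\max(f,\cdot)$ — rather than a symmetric interpolation — keeps the interior error one-sided and strictly below $2^{-t}/T$.
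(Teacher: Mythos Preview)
Your proof is correct and takes a genuinely different route from the paper's. The paper constructs $\widetilde f$ dynamically: it introduces a piecewise-shifted copy $f^{+}$ of $f$ (shifted on each grid cell so that the right endpoint lands on the quantized grid) and then defines $\widetilde f$ by an initial value plus the slope rule $\widetilde f\,'(x)=\operatorname{sgn}(f^{+}(x)-\widetilde f(x))$ whenever the two differ, and $\widetilde f\,'=f'$ once they coincide. Verifying (ii) then requires an interval-by-interval induction showing that $\widetilde f$ must intersect $f^{+}$ on every $(i/T,(i+1)/T]$, and (iii) is read off from a sandwich between $f$ and $f^{+}$.

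Your construction $\widetilde f=\max(f,\phi)$ with $\phi(x)=\max_i(y_i-|x-x_i|)$ is static and entirely explicit; the work is front-loaded into the single compatibility estimate $|y_{i+1}-y_i|\le 1/T$, and all three properties then drop out in a couple of lines with no induction. Your argument also delivers the strict inequality in (iii) directly (the paper's written proof actually ends with $\le 2^{-t}/T$), and it makes transparent exactly where the integrality of $2^t$ is used. The paper's pursuit picture is perhaps more suggestive of how a network might trace out $\widetilde f$, but as a pure existence proof your envelope-plus-clamp is cleaner and shorter.
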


\subsection{Approximation by ReLU Networks}
With the help of Proposition \ref{ftp} and the weight construction method described in Section \ref{weights}, we are able to apply the interpolation and cached function approach. 
Denoting the output of the network as $f''(x)$, we have $|f(x)-f''(x)| = |f(x)-\widetilde{f}(x)|+|\widetilde{f}(x)-f''(x)|\leq\epsilon$ by choosing appropriate hyper-parameters which are detailed in Appendix~\ref{PT3} and the network complexity is obtained accordingly.
  \begin{theorem}
\label{T3}
  For any $f\in \mathcal{F}_{1,1}$ , given  $\lambda$ distinct weights, there is a ReLU network with function-dependent structure that can approximate $f$ with any error $\epsilon\in(0,1)$, such that
    (\romannumeral 1) the depth is
    $\mathcal{O}\left(\lambda\left(\log\log\left(1/\epsilon\right)\right)^{\frac{1}{\lambda-1}}+\log\left(1/\epsilon\right)\right)$;
    (\romannumeral 2) the number of weights is    
$\mathcal{O}\left(\lambda\left(\log\log\left(1/\epsilon\right)\right)^{\frac{1}{\lambda-1}+1}+(1/\epsilon)\right)$
    (\romannumeral 3) the number of bits needed to store the network is
  $\mathcal{O}\left(\log \lambda\left(\lambda\left(\log\log\left(1/\epsilon\right)\right)^{\frac{1}{\lambda-1}+1}+(1/\epsilon)\right)\right)$.
  \end{theorem}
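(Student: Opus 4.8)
The plan is to carry out the two-stage reduction sketched just before the statement: first replace $f$ by the transformed target $\widetilde{f}$ supplied by Proposition~\ref{ftp}, and then build a quantized ReLU network $f''$ realizing the coarse-interpolation-plus-cached-function approximation of $\widetilde{f}$, with every real-valued weight synthesized by the sub-networks of Proposition~\ref{w}. Throughout, $T\in\mathbb{Z}^+$ is the number of coarse subintervals, $t$ is the weight-precision parameter fed to Propositions~\ref{ftp} and~\ref{w}, and $r$ is the precision parameter fed to the multiplication gadget of Proposition~\ref{p2}; these three will be fixed at the very end to balance the error budget.

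First I would invoke Proposition~\ref{ftp} with parameters $T$ and $t$ to obtain $\widetilde{f}$ enjoying its three listed properties. The crucial consequence is property~(ii): each node value $\widetilde{f}(i/T)$ is an integral multiple of $2^{-t}/T$, hence lies exactly on the grid of values reachable by a depth-limited cascade of the permitted weights. This is precisely what makes the otherwise-infeasible interpolation nodes $i/T$ attainable under the weight constraint, which was identified as the key obstruction. Property~(i) guarantees the Lipschitz constant is preserved, so the interpolation-error analysis for $\widetilde{f}$ is identical to that for a genuine $\mathcal{F}_{1,1}$ function, and property~(iii) supplies the first error term $|f-\widetilde{f}|<2^{-t}/T$.

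Next I would reproduce the cached-function construction for $\widetilde{f}$: interpolate linearly at the coarse scale $1/T$, and on each subinterval add a correction drawn from a shared library of ``cached'' functions that resolves the residual down to $\mathcal{O}(\epsilon)$. The novelty over the unquantized case is that every real-valued weight appearing here (the interpolation slopes/values and the cache-selection coefficients) is replaced by a Proposition~\ref{w} sub-network of precision $2^{-t}$, while each scalar multiplication is replaced by the Proposition~\ref{p2} gadget of precision $r$. Because Proposition~\ref{ftp}(ii) already places the node values on the reachable grid, no precision beyond $2^{-t}$ is ever demanded, so each such weight costs only the depth $\mathcal{O}(\lambda t^{1/(\lambda-1)})$ and weight count $\mathcal{O}(\lambda t^{1/(\lambda-1)+1})$ of Proposition~\ref{w}, and only a controlled number of distinct real-valued weights occur, keeping this overhead additive rather than multiplicative in $1/\epsilon$.

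Finally I would set the parameters and sum the errors. The contributions are $|f-\widetilde{f}|<2^{-t}/T$, the interpolation-plus-cache error (which the cache drives to $\mathcal{O}(\epsilon)$), and the accumulated residuals of the weight and multiplication gadgets (each $<2^{-t}$ or $\mathcal{O}(2^{-2(r+1)})$). Taking the coarse count $T$ of order $1/\epsilon$ up to a logarithmic factor, together with $r=\Theta(\log(1/\epsilon))$ and $t=\Theta(\log\log(1/\epsilon))$, makes each contribution $\mathcal{O}(\epsilon)$; in particular the value $t=\Theta(\log\log(1/\epsilon))$ is forced by the requirement $2^{-t}/T\le\epsilon$, i.e. $2^{-t}=\mathcal{O}(1/\log(1/\epsilon))$. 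Summing depths yields the $\log(1/\epsilon)$ term (cache and multiplication depth) plus the $\lambda(\log\log(1/\epsilon))^{1/(\lambda-1)}$ term (weight sub-networks); summing weights yields the $1/\epsilon$ bulk plus the additive overhead $\lambda(\log\log(1/\epsilon))^{1/(\lambda-1)+1}$; and charging $\log\lambda$ bits per weight gives the storage bound. The main obstacle I expect is the error-propagation bookkeeping of this last stage: one must verify that the much coarser precision $t=\Theta(\log\log(1/\epsilon))$ --- far below the $\Theta(\log(1/\epsilon))$ precision of the function-independent Theorem~\ref{T1} --- genuinely suffices, so that the many small weight-approximation residuals scattered through the cached network do not accumulate past $\mathcal{O}(\epsilon)$, while simultaneously the grid reachability of Proposition~\ref{ftp}(ii) remains compatible with the cache's scaling so that interpolation nodes are hit exactly rather than approximately.
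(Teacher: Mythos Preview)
Your plan is essentially the paper's own proof: invoke Proposition~\ref{ftp} to pass to $\widetilde f$, run Yarotsky's coarse-interpolation-plus-cache on $\widetilde f$, and synthesize the handful of needed constants via Proposition~\ref{w}, with the parameter choices $T=\Theta\bigl((1/\epsilon)/\log(1/\epsilon)\bigr)$, cache depth $m=\Theta(\log(1/\epsilon))$, and $t=\lceil\log m\rceil=\Theta(\log\log(1/\epsilon))$ exactly as you surmise. The one superfluous ingredient is Proposition~\ref{p2}: for $d=n=1$ the entire construction (piecewise-linear interpolant $f^T$, filtering function $\Phi$, and the cached sawtooth corrections) is piecewise linear in the single input variable, so no product of two variables ever arises and the $\times'$ gadget---hence your parameter $r$---does not appear in the paper's network at all; the ``scalar multiplications'' you mention are handled purely by the scaling blocks and by Proposition~\ref{w}.
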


 Using the different weight construction approach as in the case of function-independent structure, we have the result for linear quantization:

 \begin{theorem}
\label{T4}
  For any $f\in \mathcal{F}_{1,1}$ , given weight maximum precision $\frac{1}{\lambda}$, there is a ReLU network with function-dependent structure that can approximate $f$ with any error $\epsilon\in(0,1)$, such that  
  (\romannumeral 1) the depth is
   $\mathcal{O}\left(\log\left(1/\epsilon\right)\right)$;     (\romannumeral 2) the number of weights is    
$\mathcal{O}\left(1/\epsilon\right)$;     (\romannumeral 3) the number of bits needed to store the network is
  $\mathcal{O}\left(\log (\lambda)/\epsilon\right)$.
  \end{theorem}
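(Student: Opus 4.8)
The plan is to reuse, essentially verbatim, the three-stage construction behind Theorem~\ref{T3} and to change only the module that synthesizes weights, exploiting the fact that linear quantization fixes the admissible levels to the uniform grid $\{-1,\tfrac1\lambda,\dots,\tfrac{\lambda-1}\lambda\}$. First I would invoke Proposition~\ref{ftp} to replace $f$ by $\widetilde f$, whose breakpoint values $\widetilde f(\tfrac iT)$ are exact integral multiples of $2^{-t}/T$; this is precisely the property that makes the target values reachable at precision $2^{-t}$, and it is independent of whether the quantization is linear or nonlinear. I would then approximate $\widetilde f$ by the same piecewise-linear interpolation on the coarse grid $\{i/T\}$ augmented with the cached-function construction \citep{yarotsky2017error}, so that the network error $|\widetilde f - f''|$ together with the transformation error $|f-\widetilde f|<2^{-t}/T$ sums to at most $\epsilon$ for the same choice of $T$ and of $t=\mathcal{O}(\log\log(1/\epsilon))$ as in Theorem~\ref{T3}.

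The only genuinely new ingredient is the weight module. Because the admissible levels already form a uniform grid, an arbitrary $w\in[-1,1]$ can be built directly by a radix-$\lambda$ expansion: each digit is a single admissible level and cascading realizes its place value, so precision $2^{-t}$ is reached with $t/\log\lambda$ cascaded stages rather than the two-stage radix scheme of Proposition~\ref{w}. This module has width $\mathcal{O}(t)$, depth $t/\log\lambda+1$, and hence $\mathcal{O}(t^2/\log\lambda)$ weights. Substituting it for the nonlinear module at every place where a real-valued weight is required, I would re-derive the three complexity quantities.

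The counting should then be routine. With $t=\mathcal{O}(\log\log(1/\epsilon))$ the module depth $t/\log\lambda+1$ is $\mathcal{O}(\log\log(1/\epsilon))$, which is dominated by the $\mathcal{O}(\log(1/\epsilon))$ depth of the interpolation and interval-selection stage, giving total depth $\mathcal{O}(\log(1/\epsilon))$ as in (i). The interpolation-plus-cache stage contributes $\mathcal{O}(1/\epsilon)$ weights, while the weight modules add only a lower-order term that is absorbed into $\mathcal{O}(1/\epsilon)$, yielding (ii); multiplying the weight count by the $\log\lambda$ bits per stored index gives (iii).

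The main obstacle I anticipate is bookkeeping rather than a new idea: I must verify that the linear module's depth $t/\log\lambda+1$ and its aggregate weight contribution are genuinely dominated by the interpolation cost for the chosen $t$ and $T$, so that the $\lambda\bigl(\log\log(1/\epsilon)\bigr)^{1/(\lambda-1)}$ overhead terms of Theorem~\ref{T3} collapse and no term exceeding order $1/\epsilon$ in weights or $\log(1/\epsilon)$ in depth survives. The error analysis should transfer with no extra work, since Proposition~\ref{ftp} already arranges the breakpoint values to be exact multiples of $2^{-t}/T$, so the linear module represents them with no residual error beyond the $2^{-t}/T$ already accounted for, exactly as in the nonlinear case.
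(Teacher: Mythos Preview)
Your proposal is correct and matches the paper's own proof essentially line for line: the paper also reuses the Theorem~\ref{T3} construction unchanged except for swapping in a linear-quantization weight module of width $t$ and depth $t/\log\lambda+1$, then updates the $B_\delta$ and $B_m$ blocks in the complexity table to recover (\romannumeral1)--(\romannumeral3). Your radix-$\lambda$ description of the module is a slightly cleaner rendering of what the paper phrases as ``construct $W_c$ first and obtain each value by multiplying at most $t/\log\lambda$ weights,'' but the idea and the resulting dimensions are identical.
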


\section{Bound-based Optimal Bit-width} 
\label{sec:bitwidth}
In this section, we first introduce the optimal bit-width problem and then show how a theoretical bound could potentially be used to estimate the optimal bit-width of a neural network.

Because of the natural need and desire of comparison with competitive approaches, most quantization techniques are evaluated on some popular reference networks, without modification of the network topology. On the one hand, the advancement of lossless quantization almost stalls at a bit-width between two and six \citep{han2015deep,choi2016towards,sze2017efficient,blott2017scaling,su2018accuracy,faraone2018syq}. A specific bit-width depends on the compactness of the reference network and the difficulty of the task. 
On the other hand, the design space, especially the different combinations of topology and bit-width, is largely underexplored because of the
complexity,  resulting in sub-optimal results. A recent work by \cite{su2018accuracy} empirically validates the benefit of exploring flexible network topology during quantization. That work adds a simple variable of network expanding ratio,
and shows that a bit-width of four achieves the best cost-accuracy trade-off among limited options in $\{1,2,4,8,16,32\}$. 
Some recent effort on 
using reinforcement learning to optimize the network hyper-parameters \citep{he2018amc} could potentially be used to address this issue. But the current design space is still limited to a single variable per layer (such as the pruning ratio based on a reference network). 
How to estimate an optimal bit-width for a target task
without training could be an interesting research direction in
the future.

The memory bound expression as
derived in this paper helps us to determine whether there is an optimal $\lambda$ that would lead to the lowest bound and most compact network (which can be translated to computation cost in a fully connected structure) for a given target function. For
example, by dropping the lower-order term and ignoring the rounding operator, our memory bound can be simplified as
\begin{equation}
M(\lambda)=\theta_1 \lambda
\log(\lambda)\log^{\frac{1}{\lambda-1}+1}(3n2^{d}/\epsilon)
\end{equation}
where $\theta_1$ is a constant determined by $\epsilon,n,$ and $d$. We can find an optimal $\lambda$ that minimizes $M(\lambda)$:
\begin{equation}
    \lambda_{opt}=\operatorname*{argmin}_\lambda M(\lambda)
\end{equation}
As is detailed in Appendix~\ref{proofBitwidth}, we prove that there exists one and only one local minimum (hence global minimum) in the range of $[2,\infty)$ whenever $\epsilon<\frac{1}{2}$. 
We also show that $\lambda_{opt}$ is determined by 
$\log( 3n2^{d}/\epsilon)$, which can be easily dominated by $d$. Based
on such results, we quantitatively evaluate the derivative of $M(\lambda)$, and based on which the optimal bit-width $\log(\lambda_{opt})$
under various settings in Figure \ref{fig:derivative} and Figure \ref{fig:bitwidth}, respectively. In Figure \ref{fig:bitwidth}, we also mark the input dimension of a few image data sets. It is apparent to see that the optimal bit width derived from $M(\lambda)$ is dominated by $d$ and lies between one and four for a wide range of input size.
This observation is consistent with most existing empirical research results, hence showing the potential power of our theoretical
bound derivation.

\begin{figure}[tb]
\vspace{-0.1in}
\begin{subfigure}{0.5\textwidth}
  \centering
  \includegraphics[width=\linewidth]{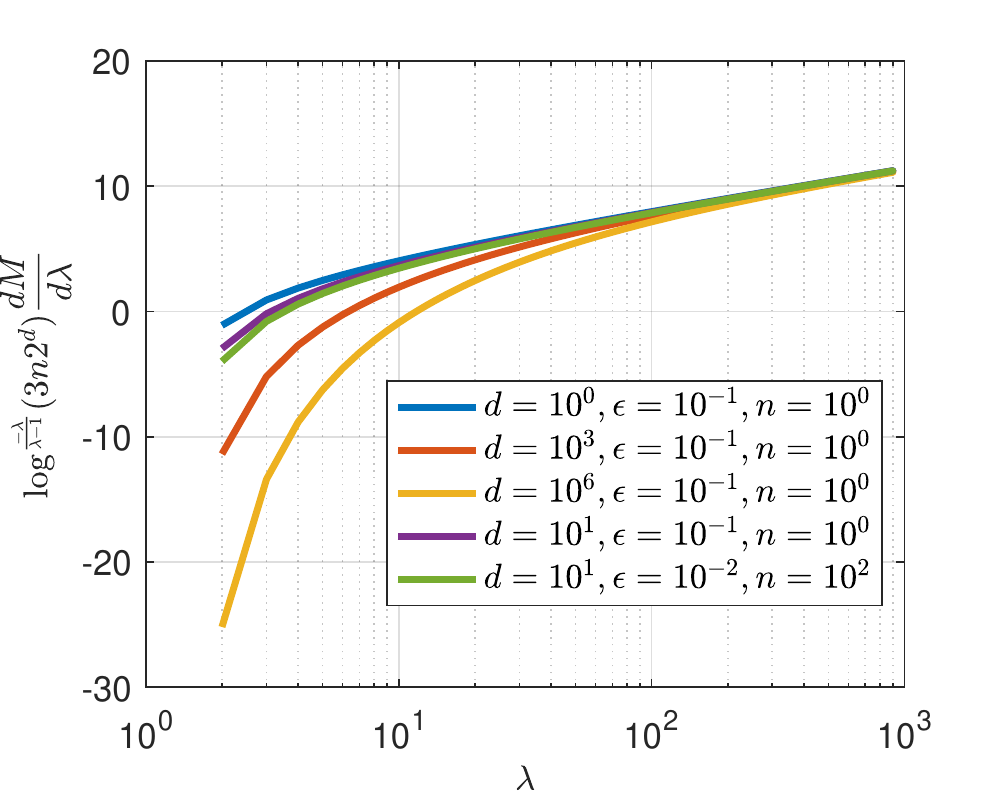}
  \caption{Scaled derivative of $M(\lambda)$}
  \label{fig:derivative}
\end{subfigure}%
\begin{subfigure}{0.5\textwidth}
   \centering
  \includegraphics[width=\linewidth]{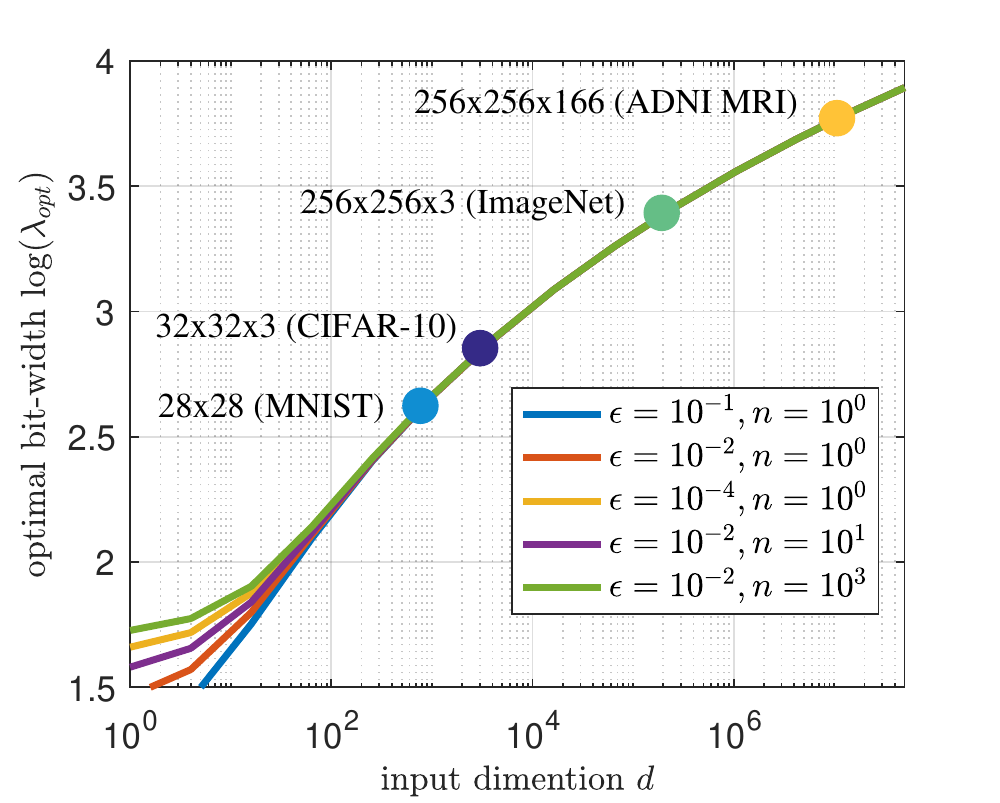}
   \caption{Optimal bit-width}
   \label{fig:bitwidth}
 \end{subfigure}
 \caption{Quantitative evaluation of the derivative of $M(\lambda)$ and the optimal bit-width $\log(\lambda_{opt})$. The derivative is scaled by $\log^{\frac{-\lambda}{\lambda-1}}(3n2^d)$ to fit in the same range. $\log^{\frac{-\lambda}{\lambda-1}}(3n2^d)$ is a positive monotonically increasing function and thus does not affect the trends too much. Note that $\lambda$ is the number of distinct weight values and thus $\log(\lambda)$ is the corresponding bit-width. It can be seen that $\epsilon$ and $n$ only affect $\log(\lambda_{opt})$ when $d$ is small ($<10^2$). We also mark the input dimension $d$ of various image data set and their corresponding $\log(\lambda_{opt})$. It shows that the optimal bit-width increases very slowly with $d$.}
 \vspace{-0.1in}
\end{figure}

Since the bounds are derived for fully connected networks and depend on the construction approach, the interesting proximity between $\log(\lambda_{opt})$ and the empirical results cannot be viewed as a strict theoretical explanation.
Regardless, we show that the complexity bound may be a viable approach to understand the optimal bit-width problem, thus potentially  accelerating the hyper-parameter optimization of deep neural networks. We defer such a thorough investigation of the optimal bit-width or optimal hybrid bit-width configuration across the network to
our future work.

\section{Discussion}
\label{sec:comparison}
In this section, we further discuss the bound of nonlinear quantization with a function-independent structure as the generality of nonlinear quantization. The availability of unquantized function-independent structures in literature also makes it an excellent reference for comparison.

\textbf{Comparison with the Upper Bound:} The quality of an upper bound lies on its tightness. Compared with the most recent work on unquantized ReLU networks \citep{yarotsky2017error}, where the upper bound on the number of weights to attain an approximation error $\epsilon$ is given by 
$\mathcal{O}\left(\log(1/\epsilon)\left(1/\epsilon\right)^{\frac{d}{n}}\right)$, 
our result for a quantized ReLU network is given by $\mathcal{O}\left(\lambda \left(\log^{\frac{1}{\lambda-1}+1}(1/\epsilon)\right)\left(1/\epsilon\right)^{\frac{d}{n}}\right)$, which translates to an increase by a factor of $\lambda \left(\log^{\frac{1}{\lambda-1}}(1/\epsilon)\right)$.  Loosely speaking, this term reflects the loss of expressive power because of weight quantization, which  decreases quickly as $\lambda$ increases.

\textbf{Comparison with the Lower Bound:} We also compare our bound with the lower bound of the number of weights needed to attain an error bound of $\epsilon$ to have a better understanding on the tightness of the bound. We use the lower bound for unquantized ReLU networks from \citep{yarotsky2017error}, as it is also a natural lower bound for quantized ReLU networks. Under the same growth rate of depth, the lower bound is given by $\Omega(\log^{-3}(1/\epsilon)\left(1/\epsilon\right)^{d/n})$, while our upper bound is, within a polylog factor when $\lambda$ is a constant, $\mathcal{O}(\lambda\log^{\frac{1}{\lambda-1}+1}(1/\epsilon)(1/\epsilon)^{d/n})$. The comparison validates the good tightness of our upper bound.

\textbf{The Upper Bound of Overhead:} More importantly, the above comparison yields an upper bound on the possible overhead induced by quantization. By comparing the expressions of two bounds while treating $\lambda$ as a constant, we can show that, to attain the same approximation error bound $\epsilon$,  the number of weights needed by a quantized ReLU network is no more than $\mathcal{O}(\log^5(1/\epsilon))$ times that needed by an unquantized ReLU network. Note that this factor is of much lower order than the lower bound   $\Omega(\log^{-3}(1/\epsilon)\left(1/\epsilon\right)^{d/n})$. 
 This little overhead introduced by weight quantization explains in part the empirical success on network compression and acceleration by quantization and also answers in part the questions as raised in Section \ref{sec:intro}. Given the significant benefits of quantization in term of memory and computation efficiency, we anticipate that the use of quantization networks will continue to grow, especially on resource-limited platforms.

\textbf{Future Work:} There remain many other avenues for future investigation. For example, although we derived the first upper bound of quantized neural networks, the lower bound is still missing. If a tight lower bound of the network size is established, it could be combined with the upper bound to give a much better estimation of required resources and the optimal bit-width.
We believe the trends associated with the bounds can also be useful and
deserve some further investigation. For example, the trend may help hardware designers in their early stage of design exploration without the need of lengthy training. 
While we assume a uniform bit-width across all layers, another area of research is to allow  different bit-widths in different layers, which could achieve better efficiency and potentially provide theoretical justifications on the emerging trend of hybrid quantization \citep{zhang2017high,wang2018design}.

\bibliography{iclr2019_conference}
\bibliographystyle{iclr2019_conference}
\newpage
\appendix
\section{Proofs}
\label{sec:A}

\subsection{The proof of Proposition 1}
\label{PP1}

\begingroup
\def\theproposition{\ref{p1}}
\begin{proposition}
Denote the design parameter that determines the approximation error bound as $r$. Let $f_s^r$ be a ReLU sub-network with only two weight values $\frac{1}{2}$ and $-\frac{1}{2}$. The function $f_s(x)=x^2$ on the segment $[0,1]$ can be approximated by $f_s^r$, such that
    (\romannumeral 1) if $x=0$, $f_s^r(x) = 0$;
    (\romannumeral 2) the approximation error $\epsilon_s\leq 2^{-2(r+1)}$;
    (\romannumeral 3) the depth is $\mathcal{O}\left(r\right)$;
    (\romannumeral 4) the width is a constant;
    (\romannumeral 5) the number of weight is $\mathcal{O}\left(r\right)$.
\end{proposition}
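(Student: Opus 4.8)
The plan is to realize $f_s^r$ as the exact ReLU implementation of the degree-$r$ ``sawtooth'' interpolant of $x^2$, following the telescoping construction of Yarotsky/Telgarsky but re-engineered so that every weight is $\pm\frac12$. First I would introduce the tent map $g(x)=2\sigma(x)-4\sigma(x-\tfrac12)$, which satisfies $g(0)=0$ and maps $[0,1]$ onto $[0,1]$ (it equals $2x$ on $[0,\tfrac12]$ and $2-2x$ on $[\tfrac12,1]$). Writing $g_k$ for the $k$-fold composition $g\circ\cdots\circ g$, the central identity is
\begin{equation}
f_r(x)\;=\;x-\sum_{k=1}^{r}\frac{g_k(x)}{2^{2k}},
\end{equation}
where $f_r$ is precisely the piecewise-linear interpolant of $x^2$ at the nodes $j\,2^{-r}$, $j=0,\dots,2^r$. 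I would then set $f_s^r:=f_r$ and prove each claimed property against this function.

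The two analytic claims are immediate. For (ii), since $x^2$ is convex with $(x^2)''=2$ and $f_r$ interpolates it at nodes of spacing $2^{-r}$, the standard piecewise-linear interpolation error bound gives $0\le x^2-f_r(x)\le \frac{(2^{-r})^2}{8}\cdot 2 = 2^{-2(r+1)}$, so $\epsilon_s\le 2^{-2(r+1)}$. For (i), because $g(0)=0$ we have $g_k(0)=0$ for every $k$, and the linear term vanishes at $x=0$, so $f_s^r(0)=0$ holds \emph{exactly} (the only approximation is $f_r\approx x^2$; the network computes $f_r$ on the nose). This exactness at $0$ is what later lets Proposition~\ref{p2} enforce $\times'(x,y)=0$ when an argument is zero.

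The real work, and the main obstacle, is claims (iii)--(v): realizing this with weights restricted to $\pm\frac12$ while keeping depth and weight count $\mathcal{O}(r)$ at constant width. The tent gadget $g(y)=2\sigma(y)-4\sigma(y-\tfrac12)$ uses the integer coefficients $2$ and $-4$, which I would synthesize by summing a constant number of $\pm\frac12$-weighted copies of each input (e.g.\ $2=4\cdot\tfrac12$), costing only a constant factor in width and weights per application. Stacking $r$ such gadgets computes $g_1,\dots,g_r$ in sequence in depth $\mathcal{O}(r)$. The delicate point is the scale factors $2^{-2k}$: naively scaling the $k$-th term by $2^{-2k}$ at the moment $g_k$ is produced would cost $\mathcal{O}(k)$ cascaded halvings per block and hence $\mathcal{O}(r^2)$ total depth. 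I would avoid this by carrying a single constant-width running accumulator that is updated by a constant-size $\pm\frac12$ gadget at each block (conceptually $R_k = 4R_{k-1}-g_k$, with the factor $4$ again built from $\pm\frac12$ and a second channel $\sigma(\pm R)$ used to carry the possibly-negative value), so that $R_r=-\sum_{k=1}^r 4^{\,r-k}g_k$, and then performing the aggregate rescaling by $4^{-r}$ exactly \emph{once} through $\mathcal{O}(r)$ halving layers before adding the carried linear term $x$; this yields $x+4^{-r}R_r=f_r$. Consolidating all power-of-two scaling into this single tail stage keeps the total depth and weight count at $\mathcal{O}(r)$ and the width constant, establishing (iii)--(v). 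Verifying that this accumulator reproduces the coefficients $2^{-2k}$ correctly, that all intermediate (possibly large or negative) values are faithfully propagated through the ReLU gates, and that no weight other than $\pm\frac12$ is ever introduced is the crux of the argument.
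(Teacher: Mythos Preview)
Your proposal is correct and essentially identical to the paper's proof: the paper uses the same tent map $g$, the same telescoping identity $f_s^r(x)=x-\sum_{i=1}^r 2^{-2i}g^{\circ i}(x)$, the same interpolation error bound, and the same ``pre-scale'' trick of accumulating via $A_k=4A_{k-1}-g^{\circ k}$ (with $A_0=x$ rather than your $R_0=0$ plus a separately carried $x$) and then performing a single $2^{-2r}$ rescaling over $2r$ tail layers. The only cosmetic difference is whether the linear term $x$ rides inside the accumulator or alongside it.
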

\addtocounter{proposition}{-1}
\endgroup

\begin{proof}
For $f_s(x)=x^2$, let $f_s^r$ be the piecewise linear interpolation of $f$ with $2^r+1$ uniformly distributed breakpoints $\frac{k}{2^r},k=0,\dots,2^r$. We have $f_s^r\left(\frac{k}{2^r}\right)=\left(\frac{k}{2^r}\right)^2, k=0,\dots,2^r$ and the approximation error $\epsilon_{s}=||f_s^r(x)-f_s(x)||_{\infty}\leq2^{-2(r+1)}$.
We can use the function $g:[0,1]\mapsto[0,1]$ to obtain $f_s^r$:
  \begin{equation}
    g(x) =
  \begin{cases}
  2x &  x<\frac{1}{2} \\
  2(1-x) &  x\geq\frac{1}{2}, \\
  \end{cases}
  \end{equation}
  \begin{equation}
f_s^{r}(x)=x-\sum_{i=1}^{r}2^{-2i}g^{\circ i}(x)
  \end{equation}
  where $g^{\circ i}(x)$ is the $i$-th iterate of $g(x)$.
 Since $g(x)$ can be implemented by a ReLU sub-network as $g(x)=2\sigma(x)-4\sigma(x-\frac{1}{2})$, $g^{\circ r}(x)$ can be obtained by concatenating such implementation of $g(x)$ for $r$ times. 
 Now, to implement $f^r_s(x)$ based on $g^{\circ r}(x)$, all we need are weights $\{2^{-2},2^{-4},\cdots,2^{-2(r-1)},2^{-2r}\} $, which
 can be easily constructed with additional $2r$ layers and the weight $\frac{1}{2}$. 
 
 Note that a straightforward implementation will have to scale $g^{\circ i}(x)$ separately (multiply by different numbers of $\frac{1}{2}$) before subtracting them from $x$ because each $g^{\circ i}(x)$ have a different coefficient.
 Then the width of the network will be $\Theta(r)$. Here we use a ``pre-scale'' method to reduce the network width from $\Theta(r)$ to a constant. The network constructed is shown in Figure \ref{frx}. The one-layer sub-network that implements $g(x)$ and the one-layer sub-network that scales the input by $4$ are denoted as $B_g$ and $B_m$ respectively. 
 Some units are copied to compensate the scaling caused by $\frac{1}{2}$. The intermediate results $g^{\circ i}(x)$ are computed by the concatenation of $B_g$ at the $(i+1)$-th layer. The first $B_m$ takes $x$ as input and multiply it by $4$. 
 The output of $i$-th $B_m$ is subtracted by $g^{\circ i}(x)$ and then fed to the next $B_m$ to be multiplied by $4$ again. There are $r$ layers of $B_m$ and all $g^{\circ i}(x)$ are scaled by $2^{2(r-i)}$ respectively.
 As a result, we obtain $2^{2r}x-\sum_{i=1}^{r}2^{2(r-i)}g^{\circ i}(x)$ after the last $B_m$. Then it is scaled by $2^{-2r}$ in the later $2r$ layers to get $f^r_s(x)$. 
  In this way, we make all $g^{\circ i}(x)$ sharing the same scaling link and a constant width can be achieved. 
 \end{proof}
\begin{figure}[h]
\centering
\includegraphics[width=4.0in]{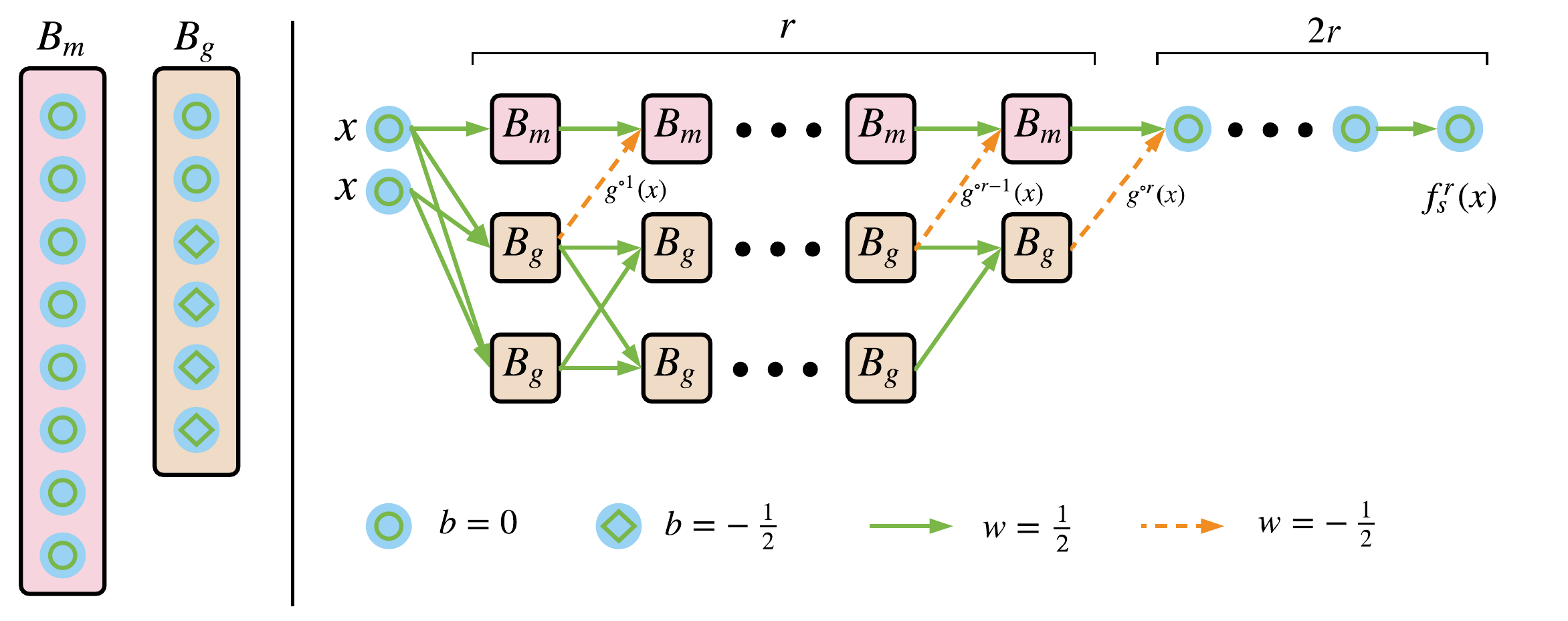}
\caption{A qunatized ReLU network that implements $f^r_s(x)$. In the interest of clarity, we depict the sub-networks with different functions as different colored blocks. A connection from or to a block indicates connections from or to all units in the block. Details of block $B_m$ and block $B_g$ are depicted on the left. $b$ and $w$ denote bias and weight respectively. }
\label{frx}
\end{figure}

\subsection{The proof of Theorem 1}
\label{PT1}

\begingroup
\def\thetheorem{\ref{T1}}
\begin{theorem}
  For any $f\in \mathcal{F}_{d,n}$ , given  $\lambda$ distinct weights, there is a ReLU network with fixed structure that can approximate $f$ with any error $\epsilon\in(0,1)$, such that 
    (\romannumeral 1) the depth is 
    $\mathcal{O}\left(\lambda\log^{\frac{1}{\lambda-1}}\left(1/\epsilon\right)+\log\left(1/\epsilon\right)\right)$;
    (\romannumeral 2) the number of weights is  
    $\mathcal{O}\left(\lambda \log^{\frac{1}{\lambda-1}+1}\left(1/\epsilon\right)\left(1/\epsilon\right)^{\frac{d}{n}}\right)$;
  (\romannumeral 3) the number of bits needed to store the network is
  $\mathcal{O}\left(\lambda\log\left(\lambda\right) \log^{\frac{1}{\lambda-1}+1}\left(1/\epsilon\right)\left(1/\epsilon\right)^{\frac{d}{n}}\right)$.
  \end{theorem}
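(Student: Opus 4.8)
The plan is to follow the constructive strategy of \cite{yarotsky2017error}, but with every exact multiplication replaced by the quantized sub-network $\times'$ of Proposition \ref{p2} and every real-valued connection weight replaced by the quantized weight-approximation sub-network of Proposition \ref{w}. There are three error sources to balance: the Taylor/interpolation error, the error from approximate multiplications, and the error from approximate weights. I would fix the grid granularity $N$ and the two design parameters $r,t$ so that each source contributes at most a constant fraction of $\epsilon$.

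First I would build the function-level approximant $f_2$. Partition $[0,1]^d$ by a uniform grid of spacing $1/N$ and attach to each grid point $\mathbf{m}/N$ the order-$(n-1)$ Taylor polynomial $P_\mathbf{m}$ of $f$ there (order $0$, i.e. $P_\mathbf{m} = f(\mathbf{m}/N)$, when $n=1$). Using a partition of unity $\{\phi_\mathbf{m}\}$ assembled from shifted and scaled trapezoid bumps, each $\phi_\mathbf{m}$ a product of $d$ univariate factors, set $f_2 = \sum_\mathbf{m}\phi_\mathbf{m} P_\mathbf{m}$. Taylor's theorem together with the Sobolev bound $|D^\mathbf{n} f|\le 1$ gives $\|f - f_2\|_\infty = \mathcal{O}(N^{-n})$, so choosing $N = \mathcal{O}\big((1/\epsilon)^{1/n}\big)$ forces this error below $\epsilon/2$. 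The number of grid cells is $(N+1)^d = \mathcal{O}\big((1/\epsilon)^{d/n}\big)$, which is exactly where the $(1/\epsilon)^{d/n}$ factor will enter the final weight count; crucially, each input lies in the support of only $\mathcal{O}(2^d)$ bumps, so at any point only a bounded number of terms are active.

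Next I would realize $f_2$ as a quantized ReLU network $f'$. Each $\phi_\mathbf{m}P_\mathbf{m}$ is a product of $\mathcal{O}(d+n)$ affine and bump factors, so it can be assembled from a bounded number of multiplication blocks $\times'$, each of depth $\mathcal{O}(r)$ and $\mathcal{O}(r)$ weights. The affine maps feeding these blocks and the final linear combination carry real coefficients (the Taylor coefficients and the partition-of-unity slopes); I would replace each such connection by the weight sub-network of Proposition \ref{w} with parameter $t$, contributing depth $\mathcal{O}\big(\lambda t^{1/(\lambda-1)}\big)$ and $\mathcal{O}\big(\lambda t^{1/(\lambda-1)+1}\big)$ weights per connection. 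Taking $r = \mathcal{O}(\log(1/\epsilon))$ and $t = \mathcal{O}(\log(1/\epsilon))$, with enough slack to absorb the $\mathcal{O}(2^d)$ active terms and the $\mathcal{O}(d+n)$ factors per product (both constants once $d,n$ are fixed), drives the combined multiplication-plus-weight error below $\epsilon/2$, so $\|f - f'\|_\infty \le \epsilon$.

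Finally I would collect the complexities, treating $d$ and $n$ as constants. Depth is that of a single product chain composed with a weight block, $\mathcal{O}(r) + \mathcal{O}\big(\lambda t^{1/(\lambda-1)}\big) = \mathcal{O}\big(\lambda\log^{1/(\lambda-1)}(1/\epsilon) + \log(1/\epsilon)\big)$, matching (\romannumeral 1). The weight count is the $\mathcal{O}\big((1/\epsilon)^{d/n}\big)$ cells times the $\mathcal{O}\big(\lambda t^{1/(\lambda-1)+1}\big)$ weights each quantized connection needs, giving $\mathcal{O}\big(\lambda\log^{1/(\lambda-1)+1}(1/\epsilon)(1/\epsilon)^{d/n}\big)$, matching (\romannumeral 2); multiplying by the $\log\lambda$ bits per quantized weight yields (\romannumeral 3). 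The main obstacle I anticipate is controlling error propagation through this composite construction: because approximate multiplications and approximate weights are nested and then summed over exponentially many (in $d$) cells, I must verify that the residual errors do not compound uncontrollably. This rests on the local support of the partition of unity, which caps the number of simultaneously active terms at $\mathcal{O}(2^d)$, and on choosing $r$ and $t$ logarithmically in $1/\epsilon$ so that each stage's error shrinks fast enough for the total to telescope to $\mathcal{O}(\epsilon)$.
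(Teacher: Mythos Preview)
Your proposal is correct and follows essentially the same construction as the paper: partition of unity plus local Taylor polynomials to get $f_2$, replace each product by the $\times'$ block of Proposition~\ref{p2}, replace each real coefficient by the weight sub-network of Proposition~\ref{w}, and set $N=\mathcal{O}((1/\epsilon)^{1/n})$, $r,t=\mathcal{O}(\log(1/\epsilon))$ while using the $2^d$ local-support bound to control the sum. The only organizational difference is that the paper pre-rounds the Taylor coefficients $\beta_{\mathbf{m},\mathbf{n}}$ to integral multiples of $\tfrac{1}{n}(d/N)^{n-|\mathbf{n}|}$ and absorbs that rounding into the $|f-f_2|$ estimate (so the weight blocks realize these values exactly rather than approximately), whereas you carry the weight-approximation error as a third additive source; both bookkeeping choices lead to the same bounds.
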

\addtocounter{theorem}{-1}
\endgroup

\begin{proof}
The proof is composed of four steps. We first approximate $f$ by $f_2$ using the Taylor polynomial of order $n-1$ and prove the approximation error bound. Note that even when $f$ is non-differentiable (only first order weak derivative exist), the Taylor polynomial of order 0 at $\textbf{x}=\frac{\textbf{m}}{N}$ can still be used, which takes the form of  $P_{\textbf{m}}=f(\frac{\textbf{m}}{N})$. Then we approximate $f_2$ by a ReLU network that is denoted as $f'$ with bounded error. After that, we present the quantized ReLU network that implements the network $f'$ and the complexity of the network.

We use a partition of unity on $[0,1]^d$: $\sum_{\textbf{m}}\psi_{\textbf{m}}(\textbf{x})\equiv1, \textbf{x}\in [0,1]^d$ where 
$\textbf{m}=(m_1,\cdots,m_d)\in\{0,1,\cdots,N\}^d$, and $h(x)$ is defined as follows:
\begin{equation}
\psi_{\textbf{m}}(\textbf{x})=\prod_{k=1}^{d}h(3Nx_k-3m_k),
\end{equation}
where $N$ is a constant and 
  \begin{equation}
    h(x) =
  \begin{cases}
              1 & |x|\leq1 \\
                 2-|x| & 1<|x|<2 \\       0 & |x|\geq2 \\
  \end{cases}. 
  \end{equation}
Note that 
$\text{supp } \psi_{\textbf{m}}\subset \{\textbf{x}:\left|x_k-\frac{m_k}{N}\right|<\frac{1}{N} \forall k\}$. For all $\textbf{m}$, we have the order $n-1$ Taylor polynomial for the function $f$ at $x=\frac{\textbf{m}}{N}$ as 
\begin{equation}
P_{\textbf{m}}(\textbf{x})=\sum_{\textbf{n}:|\textbf{n}|<n}\frac{D^{\textbf{n}}f}{\textbf{n}!}\Bigr|_{\textbf{x}=\frac{\textbf{m}}{N}}\left(\textbf{x}-\frac{\textbf{m}}{N}\right)^\textbf{n}.
\end{equation}
To get a more realizable approximation for quantized networks, we define $P'_{\textbf{m}}(\textbf{x})=\sum_{\textbf{n}:|\textbf{n}|<n}\beta_{\textbf{m},\textbf{n}}\left(\textbf{x}-\frac{\textbf{m}}{N}\right)^\textbf{n}$ where $\beta_{\textbf{m},\textbf{n}}$ is  $\frac{D^{\textbf{n}}f}{\textbf{n}!}\Bigr|_{\textbf{x}=\frac{\textbf{m}}{N}}$ rounded  to the nearest integral multiple of $\frac{1}{n}\left(\frac{d}{N}\right)^{n-|\textbf{n}|}$. Then we get an approximation to $f$ using $P'_{\textbf{m}}$ and $\psi_{\textbf{m}}$ as $f_2 \triangleq \sum_{\textbf{m}\in\{0,\cdots,N\}^d}\psi_{\textbf{m}} P'_{\textbf{m}}$.

Then the approximation error of $f_2$ is bounded by Equation (\ref{Taylor}). 
\begin{equation}
\label{Taylor}
\begin{aligned}
|f(\textbf{x})-f_2(\textbf{x})|&=|\sum_{\textbf{m}}\psi_{\textbf{m}}(\textbf{x})(f(\textbf{x})-P'_{\textbf{m}}(\textbf{x}))|\\
&\leq\sum_{\textbf{m}:|x_k-\frac{m_k}{N}|<\frac{1}{N}\forall k}|f(\textbf{x})-P'_{\textbf{m}}(\textbf{x})|\\
&\leq 2^d \max_{\textbf{m}:|x_k-\frac{m_k}{N}|<\frac{1}{N}\forall k}|f(\textbf{x})-P_{\textbf{m}}(\textbf{x})|+ 2^d\max_{\textbf{m}:|x_k-\frac{m_k}{N}|<\frac{1}{N}\forall k}|P_{\textbf{m}}(\textbf{x})-P'_{\textbf{m}}(\textbf{x})|\\
&\leq\frac{2^dd^n}{n!}\left(\frac{1}{N}\right)^n \max_{\textbf{n}:|\textbf{n}|=n}\esssup_{\textbf{x}\in[0,1]^d}|D^{\textbf{n}}f(\textbf{x})|+2^d\sum_{\textbf{n}:|\textbf{n}|<n}\max \left(|\beta_{\textbf{m},\textbf{n}}-\frac{D^{\textbf{n}}f}{\textbf{n}!}|\right)(x-\frac{\textbf{m}}{N})^\textbf{n}\\
&\leq\frac{2^dd^n}{n!}\left(\frac{1}{N}\right)^n+\frac{2^d}{n}\left(\left(\frac{d}{N}\right)^n+\cdots +\left(\frac{d}{N}\right)^{1}\left(\frac{d}{N}\right)^{n-1}\right)\\
&\leq2^d\left(\frac{d}{N}\right)^n\left(1+\frac{1}{n!}\right)
\end{aligned}
\end{equation}
The second step follows $\psi_{\textbf{m}} (\textbf{x})=0$ when $ \textbf{x}\notin \text{supp} \psi_{\textbf{m}}$. In the third step we turn the sum to multiplication, because for any $\textbf{x}$ there are up to $2^d$ terms $\psi_{\textbf{m}}(\textbf{x})$ that are not equal to zero. The fourth step uses a Lagrange's form of the Taylor remainder. The fifth step follows different round precision of $\beta_{\textbf{m},\textbf{n}}$ in different order and the fact that the number of terms with order $i$ is not greater than $d^i$.

We rewrite $f_2$ as 
\begin{equation}
 f_2(\textbf{x})=\sum_{\textbf{m}\in\{0,\cdots,N\}^d}\sum_{\textbf{n}:|\textbf{n}|<n} \beta_{\textbf{m},\textbf{n}}f_{\textbf{m},\textbf{n}}(\textbf{x}),
 \end{equation}
where 
\begin{equation}
f_{\textbf{m},\textbf{n}}(\textbf{x})=\psi_{\textbf{m}} \left(\textbf{x}-\frac{\textbf{m}}{N}\right)^\textbf{n}.
\end{equation}
Note that $\beta_{\textbf{m},\textbf{n}}$ is a constant and thus $f_2$ is a linear combination of at most $d^n(N+1)^d$
terms of  $f_{\textbf{m},\textbf{n}}(\textbf{x})$.
Note that when $d=1$, the number of terms should be $n(N+1)^d$ instead; but for simplicity of presentation we loosely use the same expression as they are on the same order.

We define an approximation to $f_{\textbf{m},\textbf{n}}(\textbf{x})$ as $f'_{\textbf{m},\textbf{n}}(\textbf{x})$. The only difference between $f_{\textbf{m},\textbf{n}}(\textbf{x})$ and $f'_{\textbf{m},\textbf{n}}(\textbf{x})$ is that all multiplication operations are approximated by $\times'$ as discussed in Proposition 2.
Consider that if we construct our function $\times'$ with $|\times'(x,y)-xy|<\epsilon_{\times'}=2^{-2(r+1)}$, then 
\begin{equation}
\label{xyz}
|\times'(x,y)-xz|\leq|x(y-z)|+\epsilon_{\times'}.
\end{equation}
Applying Equation (\ref{xyz}) to $|f'_{\textbf{m},\textbf{n}}(\textbf{x})-f_{\textbf{m},\textbf{n}}(\textbf{x})|$ repeatedly, we bound it to Equation (\ref{fmn}).
\begin{figure*}[ht]
\begin{equation}
\label{fmn}
\begin{aligned}
&\bigl|f'_{\textbf{m},\textbf{n}}(\textbf{x})-f_{\textbf{m},\textbf{n}}(\textbf{x})\bigr|=\\
&\bigg|\times'\left(h(3Nx_1-3m_1),\cdots,\times'\left(h(3Nx_d-3m_d),\times'\left(\left(x_{i_1}-\frac{m_1}{N} \right),\times'\left(\cdots,\left(x_{i_{|\textbf{n}|}}-\frac{m_{|\textbf{n}|}}{N}\right)\right)\right)\right)\right)\\
&-\left(h(3Nx_1-3m_1)\left(h(3Nx_2-3m_2)\cdots\left(h(3Nx_d-3m_d)\left(\left(x_{i_1}-\frac{m_{i_1}}{N}\right)\cdots\left(x_{i_{|\textbf{n}|}}-\frac{m_{i_{|\textbf{n}|}}}{N}\right)\right)\right)\right)\right)\bigg|\\
&\leq (d+|\textbf{n}|)\epsilon_{\times'} \qquad i_1,\cdots,i_{|\textbf{n}|}\in\{1,2,\cdots,d\}
\end{aligned}
\end{equation}
\end{figure*}

Finally, we define our approximation to $f(\textbf{x})$ as $f'(\textbf{x})$:
\begin{equation}
\label{f'x}
f'(\textbf{x}) \triangleq \sum_{\textbf{m}\in\{0,\cdots,N\}^d}\sum_{\textbf{n}:0<|\textbf{n}|<n}\beta_{\textbf{m},\textbf{n}}f'_{\textbf{m},\textbf{n}}(\textbf{x}).
\end{equation}
Using Equation (\ref{fmn}), we get the error bound of the approximation to $f_2(x)$ as in Equation (\ref{imlementerror}).
\begin{equation}
\label{imlementerror}
\begin{aligned}
|f'(\textbf{x})-f_2(\textbf{x})|&=
|\sum_{\textbf{m}\in\{0,\cdots,N\}^d}\sum_{\textbf{n}:|\textbf{n}|<n}\beta_{\textbf{m},\textbf{n}}\left(f'_{\textbf{m},\textbf{n}}(\textbf{x})-f_{\textbf{m},\textbf{n}}(\textbf{x})\right)|\\
&=|\sum_{\textbf{m}:\textbf{x}\in\text{supp}\psi_{\textbf{m}}}\sum_{\textbf{n}:|\textbf{n}|<n}\beta_{\textbf{m},\textbf{n}}\left(f'_{\textbf{m},\textbf{n}}(\textbf{x})-f_{\textbf{m},\textbf{n}}(\textbf{x})\right)|\\
&\leq2^d\max_{\textbf{m}:x\in\text{supp}\psi_{\textbf{m}}}\sum_{\textbf{n}:|\textbf{n}|<n}|f'_{\textbf{m},\textbf{n}}(\textbf{x})-f_{\textbf{m},\textbf{n}}(\textbf{x})|\\
&\leq2^d d^n\left(d+n-1 \right)\epsilon_{\times'}.
\end{aligned}
\end{equation}
  The second line follows again the support property and statement  (\romannumeral 1) of Proposition 2. The third line uses the bound $|\beta_{\textbf{m},\textbf{n}}|\leq1$. The fourth line is obtained by inserting Equation (\ref{fmn}).

  Then the final approximation error bound is as follows:
\begin{equation}
\label{finalerrorbound}
\begin{aligned}
|f'(\textbf{x})-f(\textbf{x})|&\leq|f'(\textbf{x})-f_2(\textbf{x})|+|f(\textbf{x})-f_2(\textbf{x})|\\
&\leq2^dd^{n}\left(d+n-1 \right)\epsilon_{\times'}+2^{d+1}\left(\frac{d}{N}\right)^n.\\
\end{aligned}
\end{equation}
Using statement  (\romannumeral 2) of Proposition 2 and choosing $r$ as $r=\frac{ \log\left(6N^n(d+n-1)\right)}{2}-1$, the approximation error turns to
\begin{equation}
\label{finalerrorbound_r}
\begin{aligned}
|f'(\textbf{x})-f(\textbf{x})|&
  \leq3\cdot2^d\left(\frac{d}{N}\right)^n.\\
\end{aligned}
\end{equation}
  Therefore, for any $f\in \mathcal{F}_{d,n}$ and $\epsilon\in(0,1)$, there is a ReLU network $f'$ that approximate $f$ with error bound $\epsilon$ if we choose $N\geq\left(3\cdot2^dd^n\right/\epsilon)^{\frac{1}{n}}$.

We now present the construction of the network for $f'(\textbf{x})$. If every $f'_{\textbf{m},\textbf{n}}(\textbf{x})$
 can be computed by a sub-network, then $f'(\textbf{x})$ is simply a weighted sum of all outputs of $f'_{\textbf{m},\textbf{n}}(\textbf{x})$. By Proposition 3, we can implement the needed weights $\beta_{\textbf{m},\textbf{n}}$ by choosing $t=\log\frac{nN^n}{d^n}$.
Then we simplify the task to constructing $f'_{\textbf{m},\textbf{n}}(\textbf{x})$.

$\times'$ can be implemented as discussed in Proposition 2. 
For $h(\left(3N\left(x_i-\frac{m_i}{N}\right)\right)$, noticing that $h(x)=h(-x)$, we can first compute $|x_i-\frac{m_i}{N}|$ as $\sigma(x_i-\frac{m_i}{N})+\sigma(\frac{m_i}{N}-x_i)$ and then scale it to $3N\left(|x_i-\frac{m_i}{N}|\right)$. The implementation of $h(x)$ can thus be simplified as $1-\sigma(x-1)+\sigma(x-2)$ since the input is nonnegative.
Furthermore, by choosing $N$ as $cd^2$ where $c\in \mathbb{N}$ and $c>1$, $\frac{1}{N}$ is an integral multiple of $\frac{1}{n}\left(\frac{d}{N}\right)^{n}$ if $n>1$. When $n=1$, $\frac{1}{N}$ is an integral multiple of $\left(\frac{1}{n}\left(\frac{d}{N}\right)^{n}\right)^2$. As discussed in Proposition 3, we build a weight construction network $B_w$ in the way that all integral multiples of the minimal precision can be obtained. Therefore, all $\frac{m_{i}}{N}$ can be obtained in the same way as $\beta_{\textbf{m},\textbf{n}}$, except that we need to concatenate two weight construction sub-networks.

Now we analyze the complexity of the network. 
The implementation of $f'(x)$ is shown in Figure \ref{whole}. The function and size of blocks are listed in Table \ref{tab:block}. Then we are able to obtain the complexity of the network.
While we can write the complexity of the network in an explicit expression, here we use the $\mathcal{O}$ notation for clarity. Let $N_{d},N_{w},N_{b}$ be the depth, the number of weights, and the number of bits required respectively. 
 The weight construction blocks $B_w$ have the highest order of number of weights and we have $N_w=\mathcal{O}\left(\lambda t^{\frac{1}{\lambda-1}+1}
N^d\right)$.
Meanwhile, we get $N_{d}=\mathcal{O}\left(\lambda t^{\frac{1}{\lambda-1}}+\log N\right)$.
Inserting $t=\log\frac{nN^n}{d^n}$ and  $N=\mathcal{O}\left(\left(1/\epsilon\right)^{\frac{1}{n}}\right)$, we get $N_{d}=\mathcal{O}\left(\lambda\log^{\frac{1}{\lambda-1}}\left(1/\epsilon\right)+\log\left(1/\epsilon\right)\right)$ and $N_{w}=\mathcal{O}\left(\lambda \log^{\frac{1}{\lambda-1}+1}\left(1/\epsilon\right)\left(1/\epsilon\right)^{\frac{d}{n}}\right)$. Multiplying $N_w$ by $\log \lambda$, $N_b$ is obtained.
This concludes the proof of Theorem 1. 
\begin{figure}
\centering
\includegraphics[width=3.0in]{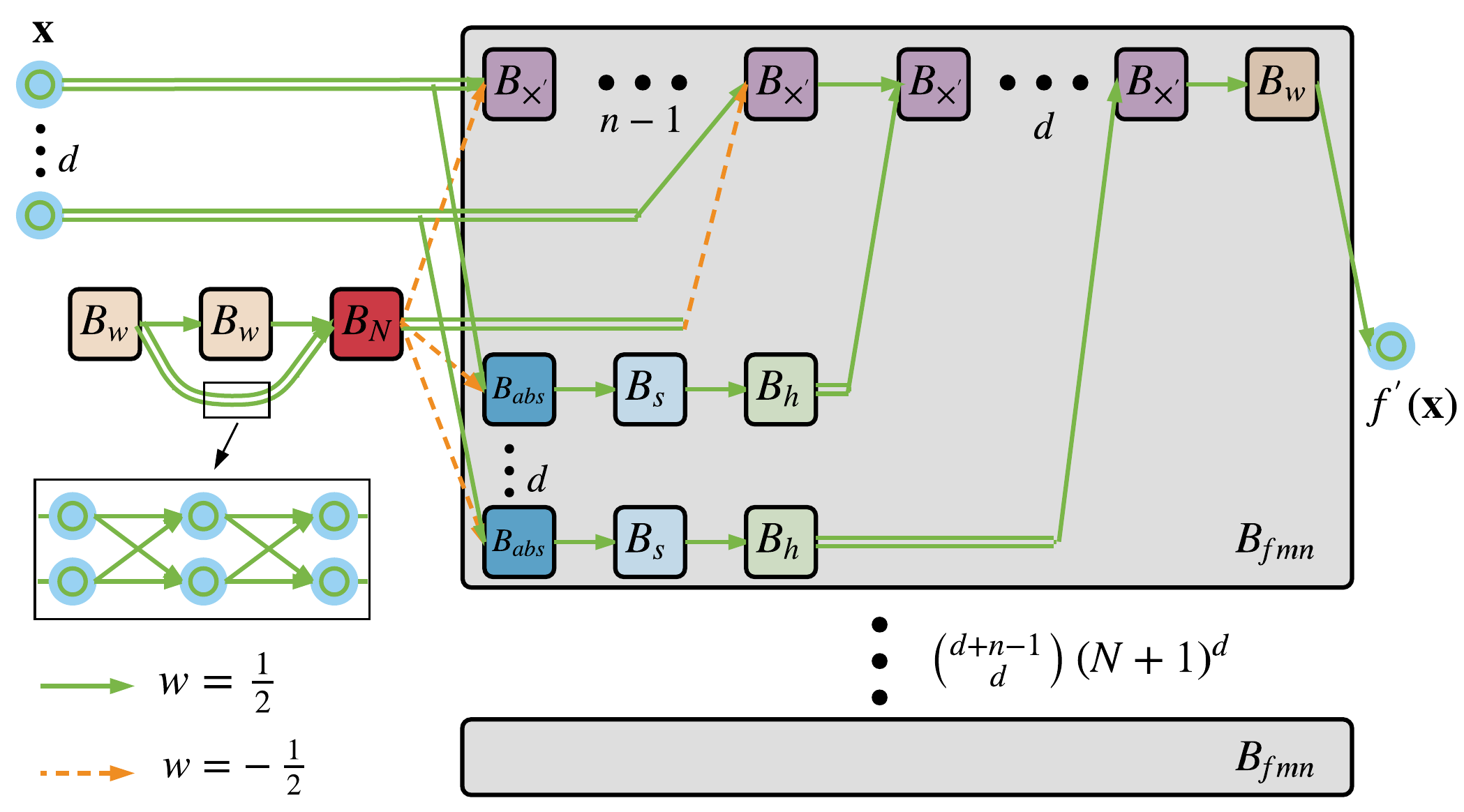}
\caption{A qunatized ReLU network that implements $f'(x)$. The connections of all $B_{f_{mn}}$ are the same. Every connection from $B_N$ to other blocks has no greater than two weights.}
\label{whole}
\end{figure}
\begin{table}
\caption{Block configuration for function-independent structure}
\label{tab:block}
\begin{small}
\begin{center}
\begin{tabular}{llll}
    \hline
    block&function&width&depth\\
    \hline
    $B_w$&construct weights&$t$ &$\lambda\left( t^{\frac{1}{\lambda-1}}-1\right)+1$ \\
    $B_N$&construct $\frac{1}{N},\cdots,\frac{N-1}{N}$ &$2N$ &1\\
    $B_{abs}$&get absolute values&4&1\\
    $B_{s}$&scale by $3N$&4&$\log \frac{3N}{2}$\\
    $B_{h}$&implement $h(x)$&12&1\\
    $B_{\times'}$&implement $\times'(x,y)$&60&$3r+1$\\
    \hline
\end{tabular}
\end{center}
\end{small}
\end{table}
\end{proof}

\subsection{The proof of Theorem 2}
\label{PT2}
\begingroup
\def\thetheorem{\ref{T1linear}}
\begin{theorem}
  For any $f\in \mathcal{F}_{d,n}$ , given weight maximum precision  $\frac{1}{\lambda}$, there is a ReLU network with fixed structure that can approximate $f$ with any error $\epsilon\in(0,1)$, such that 
    (\romannumeral 1) the depth is $\mathcal{O}\left(\log \left(1/\epsilon\right)\right)$;
    (\romannumeral 2) the number of weights is  
    $\mathcal{O}\left(\left(\log\left(1/\epsilon\right)+\frac{\log^2\left(1/\epsilon\right)}{\log \lambda}\right)\left(1/\epsilon\right)^{\frac{d}{n}}\right)$;
  (\romannumeral 3) the number of bits needed to store the network is
  $\mathcal{O}\left(\left(\log (\lambda)\log\left(1/\epsilon\right)+\log^2\left(1/\epsilon\right)\right)\left(1/\epsilon\right)^{\frac{d}{n}}\right)$.
  \end{theorem}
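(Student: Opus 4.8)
The plan is to reuse the proof of Theorem~\ref{T1} almost verbatim, changing only the weight-construction sub-network. The entire scaffolding of Theorem~\ref{T1} — approximating $f$ by the localized Taylor expansion $f_2=\sum_{\textbf{m}}\psi_{\textbf{m}}P'_{\textbf{m}}$, replacing each product by the multiplier $\times'$ of Proposition~\ref{p2}, and assembling $f'$ as a weighted sum of the $f'_{\textbf{m},\textbf{n}}$ — is independent of which finite weight set is available, because $\times'$, the partition-of-unity bumps $h$, the absolute-value gadget $B_{abs}$ and the scaling block $B_s$ all use only the weights $\pm\frac{1}{2}$. Consequently the error analysis leading to (\ref{finalerrorbound_r}), the choice $N\geq(3\cdot2^d d^n/\epsilon)^{1/n}$, and the reduction $t=\log\frac{nN^n}{d^n}$ all carry over unchanged. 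The only object that genuinely depends on the weight set is the block $B_w$ that synthesizes an arbitrary coefficient $\beta_{\textbf{m},\textbf{n}}$ (and the grid values $\frac{m_i}{N}$) to precision $2^{-t}$, so the task reduces to re-deriving Proposition~\ref{w} for the fixed uniform set $\{-1,\frac{1}{\lambda},\dots,\frac{\lambda-1}{\lambda}\}$.

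For this linear weight sub-network I would replace the mixed-radix scheme of Proposition~\ref{w} by a radix-$\lambda$ one. Because every available weight is a multiple of $\frac{1}{\lambda}$, a depth-$L$ cascade naturally realizes the partial sums of a base-$\lambda$ positional expansion, and $L=\lceil t/\log\lambda\rceil$ digits approximate any value in $[0,1]$ to within $\lambda^{-L}\le 2^{-t}$. Concretely, I would retain the width-$\mathcal{O}(t)$ binary-expansion skeleton of Proposition~\ref{w} used to assemble an arbitrary integral multiple of $2^{-t}$, but synthesize each required scale by a base-$\lambda$ cascade of depth $\frac{t}{\log\lambda}$ built from the uniformly-spaced weights, the digit $d_j\in\{0,\dots,\lambda-1\}$ being injected through the directly available weight $\frac{d_j}{\lambda}$. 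This yields a block of depth $\frac{t}{\log\lambda}+1$ and width $\mathcal{O}(t)$, hence $\mathcal{O}\!\left(t+\frac{t^2}{\log\lambda}\right)$ weights. Negative $w$ and negative inputs $x$ are handled exactly as in Proposition~\ref{w}: the sign of $w$ is flipped in the last layer using the available weight $-1$, and the branch is duplicated behind a ReLU gate so one copy fires for $x>0$ and the other for $x<0$.

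I would then re-run the complexity bookkeeping of Theorem~\ref{T1}, substituting the new $B_w$ figures for the old ones while leaving every other block of Table~\ref{tab:block} intact. The $B_w$ blocks still dominate, giving $N_w=\mathcal{O}\!\left(\left(t+\frac{t^2}{\log\lambda}\right)N^d\right)$ and $N_d=\mathcal{O}\!\left(\frac{t}{\log\lambda}+\log N\right)$. Inserting $t=\mathcal{O}(\log(1/\epsilon))$ and $N=\mathcal{O}((1/\epsilon)^{1/n})$, so that $N^d=\mathcal{O}((1/\epsilon)^{d/n})$, and using $\log\lambda\ge1$ collapses the depth to $\mathcal{O}(\log(1/\epsilon))$ and produces the stated weight bound; multiplying $N_w$ by the per-weight storage cost $\log\lambda$ yields the bit bound in statement~(iii).

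The main obstacle is establishing the radix-$\lambda$ block rigorously. I must show that combining the $\mathcal{O}(t)$ rails still yields $w'$ with $|w'-w|<2^{-t}$: since a target scale such as $2^{-i}$ is not exactly a product of multiples of $\frac{1}{\lambda}$, each rail carries a residual of order $\lambda^{-L}$, and these must be summed over $\mathcal{O}(t)$ rails and kept below $2^{-t}$, which only inflates $L$ by a constant factor and hence does not change the order of the depth. I also have to check that the tap-and-accumulate routing stays within width $\mathcal{O}(t)$ — possibly via a ``pre-scale'' device like the one in Proposition~\ref{p1}, so that contributions injected at different depths share a common scaling rail — and that the block is genuinely function-independent, i.e.\ its topology depends only on $t$ and $\lambda$ and not on the particular digits $d_j$. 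Once the block is in place, everything downstream is reused unchanged and the theorem follows by substitution.
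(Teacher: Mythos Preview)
Your proposal is correct and follows essentially the same route as the paper: reuse the entire Theorem~\ref{T1} construction and error analysis, replacing only the weight-synthesis block $B_w$ by one of width $t$ and depth $\frac{t}{\log\lambda}+1$, then rerun the bookkeeping to obtain the stated depth, weight, and bit bounds. The paper's proof is a single terse paragraph that simply asserts each element of $W_c$ can be realized by multiplying at most $\frac{t}{\log\lambda}$ of the uniform weights; your explicit radix-$\lambda$ description and your flagged concern about residual error when $2^{-i}$ is not an exact product of the available weights are more careful than the paper on this point, but the approach and the resulting complexity are identical.
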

\addtocounter{theorem}{-1}
\endgroup

With $\lambda$ distinct values, a linearly quantized network has a minimal resolution of $\frac{1}{\lambda}$.
The proof for the approximability of linear quantization can be done in the same way as Theorem \ref{T1} except for a different sub-network for weight approximation. We still construct $W_c$ in Proposition 3 first and any weight value from $W_c$ can be obtained by multiply at most $\frac{t}{\log\lambda}$ weights. Thus the width and depth of the weight approximation network will be $t$ and $\frac{t}{\log \lambda}+1$ respectively. Updating the $B_w$ in Table 1, we obtain the complexity accordingly. 

\subsection{The proof of Proposition 4}
\label{PP4}

\begingroup
\def\theproposition{\ref{ftp}}
\begin{proposition}
For any $f\in \mathcal{F}_{1,1}$, $t\in \mathbb{Z^+}$, and $T\in \mathbb{Z^+}$, there exists a function $\widetilde{f}(x)$ such that 
    (\romannumeral 1) $\widetilde{f}(x)$ is a continuous function with Lipschitz constant 1; (\romannumeral 2) $\widetilde{f}(\frac{i}{T})$ = $\left\lceil T f\left(\frac{i}{T}\right) /2^{-t}\right\rceil\frac{2^{-t}}{T}$; (\romannumeral 3) $|\widetilde{f}(x) - f(x)| < \frac{2^{-t}}{T} $.
\end{proposition}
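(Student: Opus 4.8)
The plan is to build $\widetilde f$ explicitly as a one-dimensional Lipschitz (McShane--Whitney) extension of suitably rounded nodal data, clamped below by $f$ itself. Write $\delta := \tfrac{2^{-t}}{T}$, put nodes $x_i := \tfrac{i}{T}$ for $i=0,1,\dots,T$, and set the prescribed nodal values $g_i := \delta\,\lceil f(x_i)/\delta\rceil$, which is exactly the value demanded by (ii). By definition of the ceiling, $f(x_i)\le g_i < f(x_i)+\delta$, so $g_i$ is the round-up of $f(x_i)$ to the grid $\delta\mathbb{Z}$. The naive choice of linearly interpolating the $g_i$ between nodes fails (iii): the linear interpolant of a $1$-Lipschitz function can deviate from it by up to $\tfrac{1}{2T}\gg\delta$. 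Instead I would define
\[
L(x) := \max_{0\le i\le T}\bigl(g_i - |x - x_i|\bigr), \qquad \widetilde f(x) := \max\bigl(f(x),\,L(x)\bigr).
\]

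For (i): each map $x\mapsto g_i-|x-x_i|$ is $1$-Lipschitz, so their finite maximum $L$ is $1$-Lipschitz; since $f\in\mathcal F_{1,1}$ has $|f'|\le 1$ a.e. and hence is $1$-Lipschitz, $\widetilde f=\max(f,L)$ is $1$-Lipschitz and in particular continuous. For (ii): I must show $L(x_j)=g_j$, for then $\widetilde f(x_j)=\max(f(x_j),g_j)=g_j$ because $g_j\ge f(x_j)$. The $i=j$ term already gives $L(x_j)\ge g_j$, so the content is that every other term satisfies $g_i-|x_j-x_i|\le g_j$, i.e. that the nodal data is itself $1$-Lipschitz compatible: $|g_i-g_j|\le|x_i-x_j|=\tfrac{|i-j|}{T}$.

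The main obstacle is establishing this nodal compatibility, and it is where the rounding interacts with the regularity of $f$. It suffices to treat adjacent nodes and telescope. Writing $a=f(x_i)/\delta$ and $b=f(x_{i+1})/\delta$, the $1$-Lipschitz bound on $f$ gives $|b-a|=|f(x_{i+1})-f(x_i)|/\delta\le(1/T)/\delta=2^t$, while superadditivity of the ceiling ($\lceil u+v\rceil\le\lceil u\rceil+\lceil v\rceil$) yields $|\lceil b\rceil-\lceil a\rceil|\le\lceil|b-a|\rceil\le 2^t$; multiplying by $\delta$ gives $|g_{i+1}-g_i|\le\delta\,2^t=\tfrac1T$, and the triangle inequality extends this to all $i,j$. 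Finally, for (iii), $\widetilde f\ge f$ gives the lower side $\widetilde f-f\ge 0$, and for the upper side, if $\widetilde f(x)=L(x)=g_{i^*}-|x-x_{i^*}|$ at a maximizing index $i^*$, then using $g_{i^*}=f(x_{i^*})+r_{i^*}$ with $0\le r_{i^*}<\delta$ and $f(x_{i^*})-f(x)\le|x_{i^*}-x|$,
\[
\widetilde f(x)-f(x)=\bigl(f(x_{i^*})-f(x)\bigr)-|x-x_{i^*}|+r_{i^*}\le r_{i^*}<\delta,
\]
so $0\le\widetilde f(x)-f(x)<\delta=\tfrac{2^{-t}}{T}$, which is (iii). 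I expect the only genuinely delicate point to be the compatibility estimate above; everything else is a routine verification of the two envelope properties.
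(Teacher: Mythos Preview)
Your proof is correct and takes a genuinely different route from the paper. The paper constructs $\widetilde f$ dynamically: it first defines an auxiliary piecewise-shifted function $f^+$ (equal to $f$ on each interval $(\tfrac{i}{T},\tfrac{i+1}{T}]$ but translated so that $f^+(\tfrac{i}{T})$ lands on the rounded value), and then defines $\widetilde f$ via a differential rule that moves with slope $\pm 1$ toward $f^+$ whenever they differ and tracks $f^+$ once they coincide. Properties (ii) and (iii) are then proved by an interval-by-interval induction showing that $\widetilde f$ and $f^+$ must intersect on each $(\tfrac{i}{T},\tfrac{i+1}{T}]$. Your envelope construction $\widetilde f=\max(f,L)$ with $L(x)=\max_i(g_i-|x-x_i|)$ is more direct: the single observation that the rounded nodal data stay $1$-Lipschitz compatible (because $|b-a|\le 2^t$ and $2^t\in\mathbb Z$, so the ceiling cannot overshoot) makes (i)--(iii) drop out of standard McShane--Whitney properties with no induction. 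The paper's construction is perhaps more ``constructive'' in flavor (one can read off $\widetilde f$ by sweeping left to right), while yours is cleaner analytically and makes the role of the integer $2^t$ transparent. One minor slip: the inequality $\lceil u+v\rceil\le\lceil u\rceil+\lceil v\rceil$ is \emph{sub}additivity, not superadditivity, but you state and apply it correctly.
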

\addtocounter{proposition}{-1}
\endgroup

 \begin{proof}
We first divide $[0,1]$ into $T$ intervals uniformly and define $f^{+}(x)$ as 
\begin{equation}
\label{fplus}
f^{+}(x) \triangleq f(x) + \left(\left\lceil T f\left(\frac{\lceil Tx\rceil}{T}\right)/2^{-t} \right\rceil\frac{2^{-t}}{T} - f\left(\frac{\lceil Tx\rceil}{T}\right)\right).
  \end{equation}
 Note that
  $f^{+}(\frac{i}{T}) = \left\lceil T f\left(\frac{i}{T}\right) /2^{-t}\right\rceil\frac{2^{-t}}{T}$ and $\dv{f^{+}(x)}{x} = \dv{f(x)}{x}$ on$(\frac{i}{T},\frac{i+1}{T})$ where $, i= 1,2,\cdots,T$.
  Then, we define $\widetilde{f}(x)$:
    \begin{equation}
    \label{ft}
       \widetilde{f}(0) \triangleq \left\lceil{\frac{T f\left(0\right) }{2^{-t}}}\right\rceil\frac{2^{-t}}{T},
  \end{equation}
  \begin{equation}
  \label{dft}
   \dv{\widetilde{f}(x)}{x} \triangleq
  \begin{cases}
  \text{sgn} 
  (f^{+}(x) - \widetilde{f}(x)) &  \widetilde{f}(x) \neq f^{+}(x)\\
  \dv{f(x)}{x} &  \widetilde{f}(x) = f^{+}(x). \\
  \end{cases}
  \end{equation}
$f^{+}(x)$ is parallel with $f(x)$ on any given interval $(\frac{i}{T},\frac{i+1}{T}]$ and is shifted to let $f^{+}(\frac{i}{T})$ be an integral multiple of $\frac{2^{-t}}{T}$. $\widetilde{f}(\frac{i}{T})$ is $f(\frac{i}{T})$ rounded up to an integral multiple of $\frac{2^{-t}}{T}$. Meanwhile, $\widetilde{f}(x)$ approaches $f(x)$ with fixed slope whenever they are not equal. An example is shown in Figure \ref{f}.
\begin{figure}
\begin{center}
\includegraphics[width=2.5in]{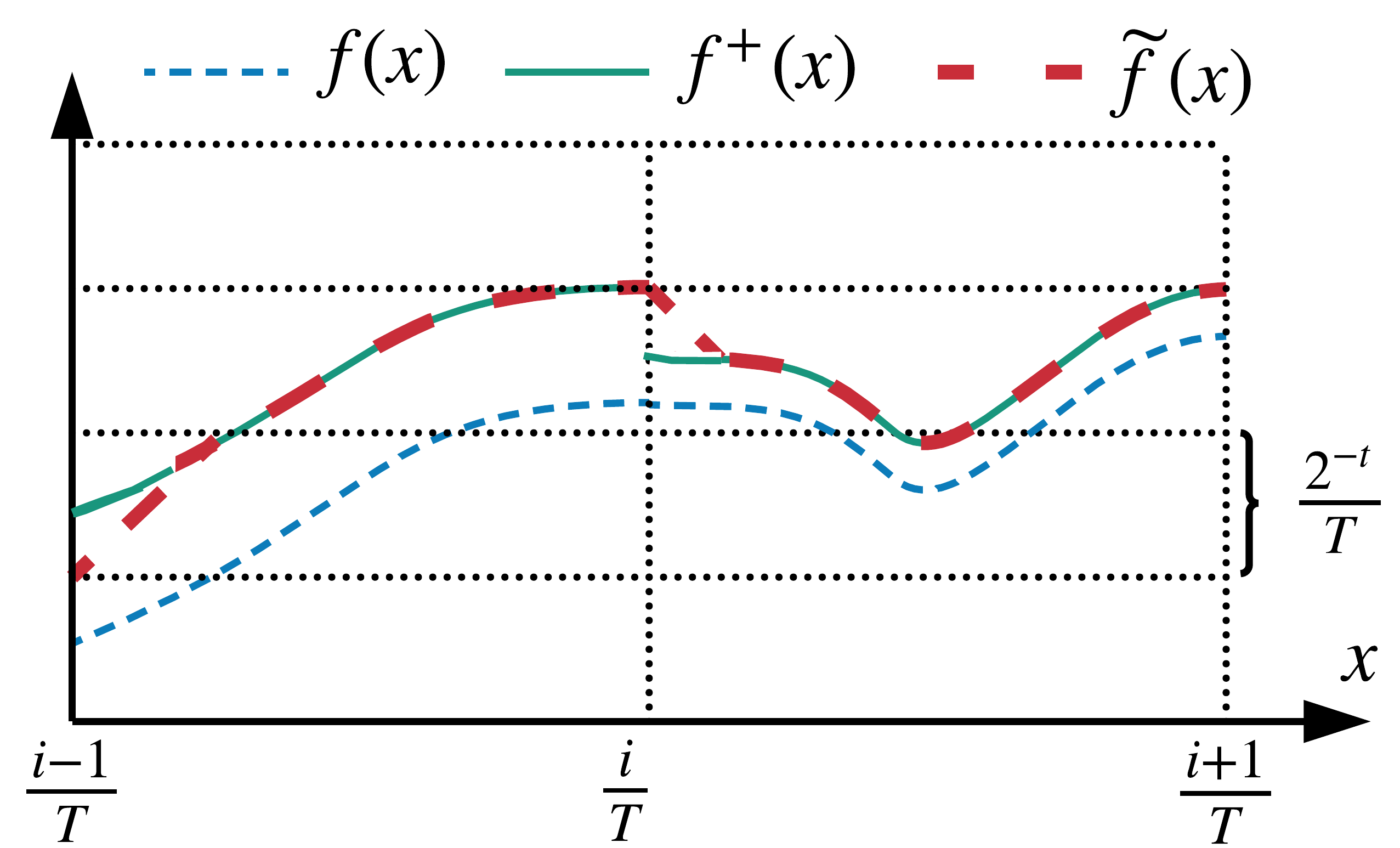}
\caption{An example to illustrate the relationship between $f(x)$, $f^{+}$ and $\widetilde{f}(x)$. }
\label{f}
\end{center}
\end{figure}

 Statement (\romannumeral 1) follows the definition of $\widetilde{f}(x)$ directly. We give a proof for statement (\romannumeral 2) by contradiction and induction.
If $\widetilde{f}(\frac{i}{T})=f^{+}(\frac{i}{T})$, then  $\widetilde{f}(x)=f^{+}(x)$ on $(\frac{i}{T},\frac{i+1}{T}]$. The proof idea for $\widetilde{f}(\frac{i}{T})>f^{+}(\frac{i}{T})$ and $\widetilde{f}(\frac{i}{T})<f^{+}(\frac{i}{T})$ are basically the same thus we present the proof of the latter for brevity. We first assume $\widetilde{f}(\frac{i}{T}) = \left\lceil T f\left(\frac{i}{T}\right) /2^{-t}\right\rceil\frac{2^{-t}}{T}$, which is satisfied by Equation  (\ref{ft}) when $i = 0$. Since $\widetilde{f}(x) = f^{+}(x)$ after the first intersection on $(\frac{i}{T},\frac{i+1}{T}]$, if $\widetilde{f}(x)$ and $f^{+}(x)$ have no intersection on $(\frac{i}{T},\frac{i+1}{T}]$, then $\widetilde{f}(\frac{i+1}{T})<f^{+}(\frac{i+1}{T})$ because $\widetilde{f}(\frac{i}{T})<f^{+}(\frac{i}{T})$. Meanwhile, we have $\dv{\widetilde{f}(x)}{x} = 1$, and
\begin{equation}
  \begin{aligned}
\widetilde{f}\left(\frac{i+1}{T}\right) &= \widetilde{f}\left(\frac{i}{T}\right)+\int_{\frac{i}{T}}^{\frac{i+1}{T}} \dv{\widetilde{f}(x)}{x} dx\\
&=\left\lceil T f\left(\frac{i}{T}\right) /2^{-t}\right\rceil\frac{2^{-t}}{T}+\frac{1}{T}\\
&=\left\lceil T\left( f\left(\frac{i}{T}\right)+\frac{1}{T}\right) /2^{-t}\right\rceil\frac{2^{-t}}{T}\\
&\geq \left\lceil T f\left(\frac{i+1}{T}\right) /2^{-t}\right\rceil\frac{2^{-t}}{T}\\
&\geq f^{+}\left(\frac{i+1}{T}\right).
\end{aligned}
  \end{equation}
This contradicts the fact that $\widetilde{f}(\frac{i+1}{T})<f^{+}(\frac{i+1}{T})$. Thus $\widetilde{f}(x)$ and $f^{+}(x)$ intersect on $(0,\frac{i}{T}]$ and in turn guarantee that $\widetilde{f}(\frac{i+1}{T}) = \left\lceil T f\left(\frac{i+1}{T}\right) /2^{-t}\right\rceil\frac{2^{-t}}{T}$. By induction, we prove that $\widetilde{f}(x)$ and $f^{+}(x)$ intersect on every interval $(\frac{i}{T},\frac{i+1}{T}]$. This implies statement (\romannumeral 2).

 Now we prove the statement (\romannumeral 3). Note that we have $0\leq \widetilde{f}(\frac{i}{T})-f(\frac{i}{T})\leq \frac{2^{-t}}{T}$ by statement (\romannumeral 2).  In every interval $(\frac{i}{T},\frac{i+1}{T})$, $\widetilde{f}(x)=f^{+}(x)$ after the their intersection. Therefore we have  $0\leq|f^{+}(x)-f(x)|\leq \frac{2^{-t}}{T}$ by Equation (\ref{fplus}). Before the intersection, if $\widetilde{f}(0)<f^{+}(0)$, $\dv{(\widetilde{f}(x)-f^{+}(x))}{x}\geq0$. Since $\dv{f^{+}(x)}{x} = \dv{f(x)}{x}$ on $(\frac{i}{T},\frac{i+1}{T})$, we have $\dv{(\widetilde{f}(x)-f(x))}{x}\geq0$, thus $0\leq \widetilde{f}(\frac{i}{T})-f(\frac{i}{T})\leq \widetilde{f}(x)-f(x)\leq f^{+}(x)-f(x)\leq \frac{2^{-t}}{T}$.
 If $\widetilde{f}(\frac{i}{T})\geq f^{+}(\frac{i}{T})$, apply the same logic and we obtain 
 $0\leq f^{+}(x)-f(x)\leq \widetilde{f}(x)-f(x)\leq \widetilde{f}(\frac{i}{T})-f(\frac{i}{T}) \leq \frac{2^{-t}}{T}$. This implies statement (\romannumeral 3) and concludes the proof.
 \end{proof}

\subsection{The proof of Theorem 3}
\label{PT3}
\begingroup
\def\thetheorem{\ref{T3}}
  \begin{theorem}
  For any $f\in \mathcal{F}_{1,1}$ , given  $\lambda$ distinct weights, there is a ReLU network with function-dependent structure that can approximate $f$ with any error $\epsilon\in(0,1)$, such that
    (\romannumeral 1) the depth is
    $\mathcal{O}\left(\lambda\left(\log\log\left(1/\epsilon\right)\right)^{\frac{1}{\lambda-1}}+\log\left(1/\epsilon\right)\right)$;
    (\romannumeral 2) the number of weights is    
$\mathcal{O}\left(\lambda\left(\log\log\left(1/\epsilon\right)\right)^{\frac{1}{\lambda-1}+1}+(1/\epsilon)\right)$
    (\romannumeral 3) the number of bits needed to store the network is
  $\mathcal{O}\left(\log \lambda\left(\lambda\left(\log\log\left(1/\epsilon\right)\right)^{\frac{1}{\lambda-1}+1}+(1/\epsilon)\right)\right)$.
  \end{theorem}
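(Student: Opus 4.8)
The plan is to prove the bound in two stages: first replace $f$ by the auxiliary function $\widetilde{f}$ supplied by Proposition~\ref{ftp}, and then approximate $\widetilde{f}$ by a quantized ReLU network via interpolation and cached functions, controlling the two error contributions separately through the triangle inequality $|f(x)-f''(x)|\leq|f(x)-\widetilde{f}(x)|+|\widetilde{f}(x)-f''(x)|$. The role of Proposition~\ref{ftp} is essential here: it guarantees that the coarse-grid values $\widetilde{f}(i/T)$ land exactly on the lattice $\{k\,2^{-t}/T\}$, so that these values, and the slopes of the interpolant (which become integral multiples of $2^{-t}$), can be realized by the quantized weights of Proposition~\ref{w} \emph{without} the accumulated round-off that would otherwise force the precision $t$ up to $\log(1/\epsilon)$ as in the function-independent Theorem~\ref{T1}. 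First I would fix a coarse scale $T$ and a precision $t$, and record that the transformation error is $|\widetilde{f}(x)-f(x)|<2^{-t}/T$ by Proposition~\ref{ftp}(iii).

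Next I would construct the network for $\widetilde{f}$. Following the cached-function idea of \cite{yarotsky2017error}, I would interpolate $\widetilde{f}$ linearly on the coarse grid $\{i/T\}$ and fill in the sub-interval detail by selecting, for each interval, one element of a shared dictionary of piecewise-linear ``cached'' shapes computed once by a deep sub-network (for instance the sawtooth composition $g^{\circ i}$ used in Proposition~\ref{p1}, which generates $2^{\mathcal{O}(\log(1/\epsilon))}$ fine pieces with only $\mathcal{O}(\log(1/\epsilon))$ depth). The key adaptation to the quantized setting is that all distinct weight values required---the admissible slopes, which are multiples of $2^{-t}$ in $[-1,1]$, together with the constants indexing the cached shapes---are produced by a single weight-construction block $B_w$ of Proposition~\ref{w} (which emits all integral multiples of its minimal resolution) and then tapped off and reused across the $T$ intervals. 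Because this block is shared, its cost enters the totals only once.

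The remaining work is to balance the parameters and read off the complexity. Taking $T=\Theta(1/\epsilon)$ makes the coarse linear interpolation of the Lipschitz-$1$ function $\widetilde{f}$ accurate to $\mathcal{O}(1/T)=\mathcal{O}(\epsilon)$, and the cached corrections resolve the residual sub-grid error to the same order; the transformation error $2^{-t}/T$ is then negligible for any $t\geq 0$. The precision $t$ is instead dictated by the dictionary---the number of bits needed to index and place the cached shapes---and balancing the dictionary depth against the interpolation cost gives $t=\Theta(\log\log(1/\epsilon))$, which is exactly the precision the reused block $B_w$ must supply. With these choices the dominant weight count is the $\mathcal{O}(1/\epsilon)$ from the $T$ coarse pieces; the weight-construction block contributes only the lower-order overhead $\mathcal{O}\!\left(\lambda t^{\frac{1}{\lambda-1}+1}\right)=\mathcal{O}\!\left(\lambda(\log\log(1/\epsilon))^{\frac{1}{\lambda-1}+1}\right)$ via Proposition~\ref{w}; the depth is the cache depth $\mathcal{O}(\log(1/\epsilon))$ plus the block depth $\mathcal{O}\!\left(\lambda(\log\log(1/\epsilon))^{\frac{1}{\lambda-1}}\right)$; and the bit count follows on multiplying the weight count by $\log\lambda$. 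These are exactly the three stated bounds.

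The main obstacle I anticipate is precisely the interaction between quantization and the cached-function compression. In the unquantized construction of \cite{yarotsky2017error} the breakpoint values are free real numbers, whereas here every value the network produces must be synthesizable from the $\lambda$ admissible weights; reconciling this with the accuracy requirement is what forces the detour through $\widetilde{f}$, and the delicate point is to simultaneously (i) keep the grid values exactly reachable, (ii) confine the weight-construction cost to a single shared $B_w$ so that its price remains a lower-order $\log\log$ term rather than multiplying the $\Theta(1/\epsilon)$ interval count, and (iii) choose $T$, $t$, and the dictionary size so that the transformation, interpolation, and cache-quantization errors all stay below $\epsilon$ at once. Verifying that the shared block can indeed serve all $T$ intervals without a per-interval precision penalty is the crux on which the $\mathcal{O}(1/\epsilon)$ main term, rather than $\mathcal{O}(\log(1/\epsilon)/\epsilon)$, depends.
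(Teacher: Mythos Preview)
Your high-level strategy coincides with the paper's: transform $f$ to $\widetilde{f}$ via Proposition~\ref{ftp}, approximate $\widetilde{f}$ by Yarotsky's interpolation-plus-cached-functions construction, realize the required constants through the weight block of Proposition~\ref{w}, and combine errors by the triangle inequality. You also correctly identify $t=\Theta(\log\log(1/\epsilon))$ as the precision that governs the $\lambda(\log\log(1/\epsilon))^{\frac{1}{\lambda-1}+1}$ term.

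The gap is in the choice of the coarse grid $T$. You set $T=\Theta(1/\epsilon)$, but at that scale the piecewise-linear interpolant of the Lipschitz-$1$ function $\widetilde{f}$ is already $\epsilon$-accurate by itself; the cached dictionary becomes superfluous and no dictionary-based argument can force $t=\Theta(\log\log(1/\epsilon))$, so your reasoning is internally inconsistent. The whole point of the cached-function device is to \emph{coarsen} $T$ by a logarithmic factor and let the dictionary restore the missing resolution. The paper takes
\[
m=\Bigl\lceil\tfrac12\log(1/\epsilon)\Bigr\rceil,\qquad t=\lceil\log m\rceil,\qquad \delta=\tfrac{1}{8m},\qquad T=\Bigl\lceil\tfrac{8}{\epsilon\log(1/\epsilon)}\Bigr\rceil,
\]
so that the dominant weight cost---the input wiring of the width-$T$ block $B_{f^{*}_1}$ in the cached-function pipeline---lands at $\Theta(1/\epsilon)$, while the block $B_m$ (which manufactures the $m$ cached breakpoints at precision $t$) supplies the secondary $\lambda(\log\log(1/\epsilon))^{\frac{1}{\lambda-1}+1}$ term. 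If instead you keep $T=\Theta(1/\epsilon)$ together with $m=\Theta(\log(1/\epsilon))$ cached pieces, that same wiring scales as $\Theta((1/\epsilon)\log(1/\epsilon))$ and overshoots the bound you are trying to prove. Your final paragraph is right that the passage from $\log(1/\epsilon)/\epsilon$ down to $1/\epsilon$ is the crux, but the mechanism that achieves it is the $T$--$m$ balance in the cached construction, not a per-interval sharing argument for $B_w$.
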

\addtocounter{theorem}{-1}
\endgroup

 \begin{proof}
We first transform $f$ to $f''$ with Proposition 4. Then we apply the interpolation and cached function method from [35] while using the weight construction method described in Proposition 3. Denoting the output of the network as $f''(x)$, we have $|f(x)-f''(x)| = |f(x)-\widetilde{f}(x)|+|\widetilde{f}(x)-f''(x)|\leq\epsilon$ by choosing the hyper-parameters as $m=\lceil\frac{1}{2}\log\left(1/\epsilon\right)\rceil $, $t=\lceil\log m\rceil$, $\delta = \frac{1}{8m}$, $T=\lceil\frac{8}{\epsilon\log\left(1/\epsilon\right)}\rceil$.
  
The approximation network is shown in Figure \ref{adaptivef}. The sizes of blocks are given in Table \ref{tab:block2} where $f^{T}$ is the uniform linear interpolation function of $f$ with $T-1$ breakpoints, $f^{*}$ is the sum of the selected cached functions, $\Phi(x)$ is a filtering function. The inputs connections to $B_{f^{*}_1}$ and the connections inside $B_m$ have higher order to the number of weights than others. Then the complexity can be obtained accordingly.\end{proof}

\begin{figure}[htb]
\centering
\includegraphics[width=3.0in]{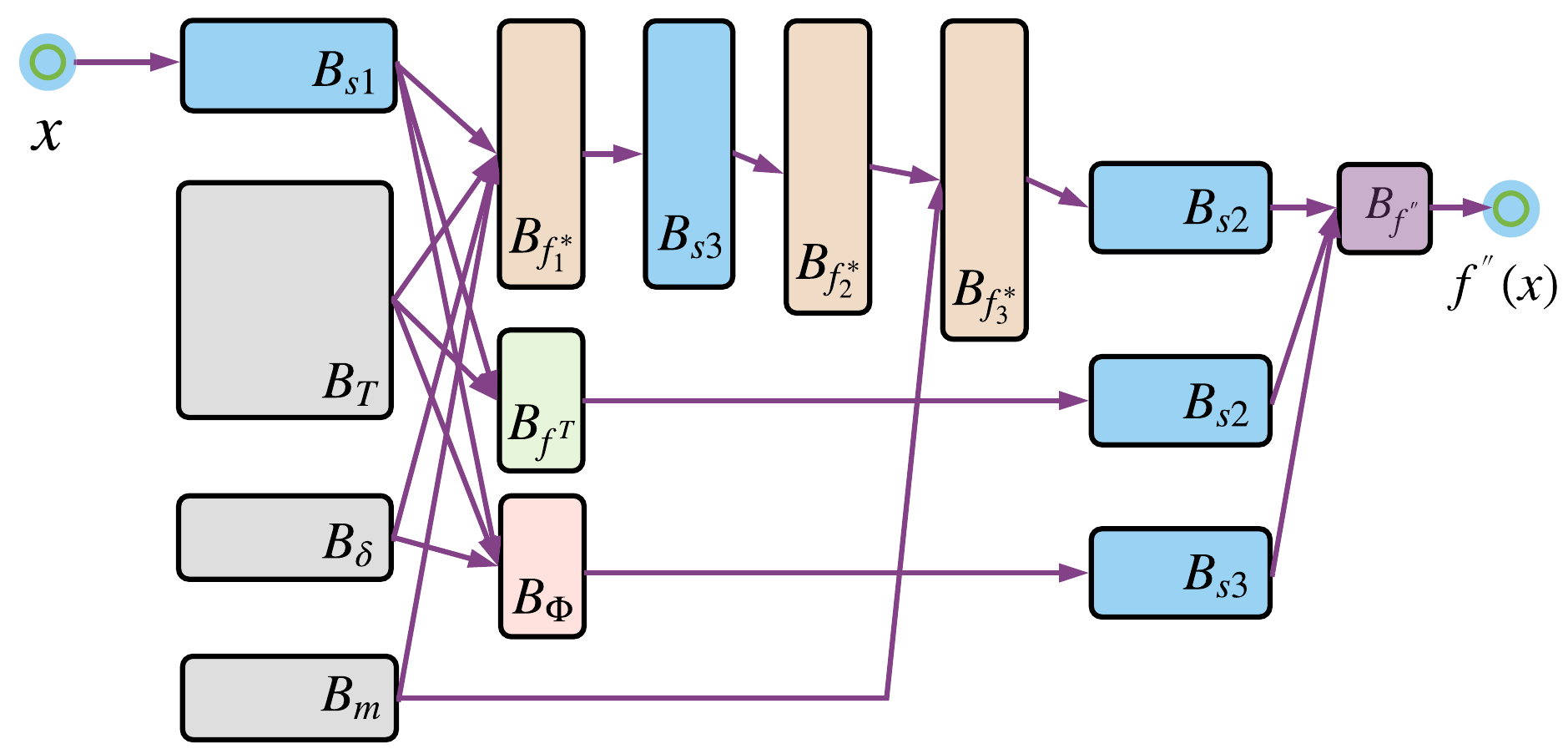}
\caption{A qunatized ReLU network that implements $f''(x)$. Illustration only, details are omitted. For example, arrows between blocks can represent more than one connection in the network and there are short-cut connections that allow scaling only part of the inputs. }
\label{adaptivef}
\end{figure}

\begin{table}
\caption{Block configuration for function-dependent structure. Terms with lower order are omitted.}
\label{tab:block2}
\begin{small}
\begin{center}
\begin{tabular}{llll}
    \hline
    block&function&width&depth\\
    \hline
    $B_T$&construct $1,2,\cdots,T$&$\log T$ &$\log T$ \\
    $B_{\delta}$&construct $\delta$ &$1$ &$\lambda(t^{\frac{1}{\lambda-1}})$\\
    $B_m$&construct $\frac{1}{m},\cdots,\frac{m-1}{m}$ &$t$ &$\lambda(t^{\frac{1}{\lambda-1}})$\\
    $B_{f^T}$&implement $f^T(x)$ &$T$ &$1$\\
    $B_{\Phi}$&implement $\Phi(x)$&4$T$&$1$\\
    $B_{s1}$&scale by $T$&4&$\log T$\\
    $B_{s2}$&scale by $\frac{1}{T}$&1&$\log T$\\
    $B_{s3}$&scale by $\frac{1}{\delta}$&4&$\log m$\\
    $B_{f^{*}_1}$&first layer of $f^{*}$ &$T$&$1$\\
    $B_{f^{*}_2}$&second layer of $f^{*}$&$3^m$&$1$\\
    $B_{f^{*}_3}$&third layer of $f^{*}$&$m3^m$&$1$\\
    $B_{f''}$&implement $f''(x)$&$4$&$1$\\
    \hline
\end{tabular}
\end{center}
\end{small}
\end{table}

\subsection{The proof of Theorem 4}
\label{PT4}

\begingroup
\def\thetheorem{\ref{T4}}
 \begin{theorem}
  For any $f\in \mathcal{F}_{1,1}$ , given weight maximum precision $\frac{1}{\lambda}$, there is a ReLU network with function-dependent structure that can approximate $f$ with any error $\epsilon\in(0,1)$, such that  
  (\romannumeral 1) the depth is
   $\mathcal{O}\left(\log\left(1/\epsilon\right)\right)$;     (\romannumeral 2) the number of weights is    
$\mathcal{O}\left(1/\epsilon\right)$;     (\romannumeral 3) the number of bits needed to store the network is
  $\mathcal{O}\left(\log (\lambda)/\epsilon\right)$.
  \end{theorem}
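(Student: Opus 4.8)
The plan is to reuse the proof of Theorem~\ref{T3} essentially verbatim, changing only the sub-network that builds the quantized weights. Concretely, I would first invoke Proposition~\ref{ftp} to replace $f$ by the transformed function $\widetilde{f}$, and then apply the piecewise-linear interpolation and cached-function construction of Theorem~\ref{T3} with the identical hyper-parameters $m=\lceil\frac{1}{2}\log(1/\epsilon)\rceil$, $t=\lceil\log m\rceil$, $\delta=\frac{1}{8m}$, and $T=\lceil\frac{8}{\epsilon\log(1/\epsilon)}\rceil$. Since the error analysis of Theorem~\ref{T3} uses only that the required weight values (the $\widetilde{f}(i/T)$ and the cached-function coefficients) are integral multiples of $2^{-t}$ that the network can realize, the bound $|f(x)-f''(x)|\le\epsilon$ is inherited unchanged; the sole quantity that must be re-derived is the size of the weight-building blocks under the linear precision constraint.

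For the weight construction I would borrow the device from the proof of Theorem~\ref{T1linear} rather than Proposition~\ref{w}. With a minimal resolution of $\frac{1}{\lambda}$, the set $W_c=\{2^{-1},2^{-2},\dots,2^{-(t-1)}\}$ can be generated by cascading at most $\frac{t}{\log\lambda}$ linearly quantized weights per entry, so the weight-approximation sub-network has width $t$ and depth $\frac{t}{\log\lambda}+1$. Crucially, this reaches exactly the same family of representable weights (all integral multiples of $2^{-t}$) as the nonlinear scheme, so Proposition~\ref{ftp}'s reachability guarantee and every error estimate of Theorem~\ref{T3} survive the substitution untouched; only the cost of the blocks $B_\delta$ and $B_m$ in Table~\ref{tab:block2} changes, their depth becoming $\frac{t}{\log\lambda}+1$.

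It then remains to re-run the complexity accounting, most easily done by comparison with Theorem~\ref{T3}. There the depth and weight bounds split into a weight-construction contribution plus a term ($\log(1/\epsilon)$ for depth, $1/\epsilon$ for weights) coming from the interpolation and cached-function blocks; the latter is insensitive to whether quantization is linear or nonlinear and is therefore inherited verbatim. For the weight-construction contribution, $t=\lceil\log m\rceil$ with $m=\mathcal O(\log(1/\epsilon))$ makes the new depth $\frac{t}{\log\lambda}+1=\mathcal O(\log\log(1/\epsilon))$, which is dominated by $\log T=\mathcal O(\log(1/\epsilon))$, so the depth collapses to $\mathcal O(\log(1/\epsilon))$, giving (i); and the corresponding weight count drops to $\mathcal O(\mathrm{polylog}(1/\epsilon))$, dominated by $1/\epsilon$, giving (ii). Multiplying (ii) by the bit-width $\log\lambda$ yields (iii). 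The main obstacle—really the only non-routine point—is this dominance claim: in Theorem~\ref{T3} the nonlinear term $\lambda(\log\log(1/\epsilon))^{\frac{1}{\lambda-1}+1}$ survives in the bound because it grows with $\lambda$, whereas the linear-construction depth $\frac{t}{\log\lambda}+1$ is at most $\mathcal O(\log\log(1/\epsilon))$ for every $\lambda\ge2$ and only shrinks as $\lambda$ increases. I would therefore verify that this domination is uniform in $\lambda$, after which the weight-construction terms fall to lower order and vanish from the $\mathcal O$-expressions; the remaining bookkeeping is exactly the updated version of Table~\ref{tab:block2}.
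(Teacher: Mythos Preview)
Your proposal is correct and follows essentially the same approach as the paper: reuse the construction of Theorem~\ref{T3} verbatim, swap the nonlinear weight-approximation sub-network for the linear one of width $t$ and depth $\frac{t}{\log\lambda}+1$ (exactly as in Theorem~\ref{T1linear}), update $B_\delta$ and $B_m$ in Table~\ref{tab:block2}, and recount. Your dominance discussion is a welcome elaboration of what the paper leaves implicit, but there is no methodological difference.
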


\addtocounter{theorem}{-1}
\endgroup
\begin{proof}
The proof for the approximability of linear quantization can be done in the same way as Theorem \ref{T3} except for a different sub-network for weight approximation. We still construct $W_c$ in Proposition 3 first and any weight value from $W_c$ can be obtained by multiply at most $\frac{t}{\log\lambda}$ weights. Thus the width and depth of the weight approximation network will be $t$ and $\frac{t}{\log \lambda}+1$ respectively. Updating the $B_{\delta}$ and $B_m$ in Table \ref{tab:block2}, we obtain the complexity accordingly. 
\end{proof}
\section{The existence of an optimal bit-width}
\label{proofBitwidth}
In this section, we prove the statement in Section~\ref{sec:bitwidth} that there exists one and only one local minimum (hence global minimum) for $M(\lambda)$ in the range of $[2,\infty)$ whenever $\epsilon<\frac{1}{2}$. 
Denote $\log( 3n2^{d}/\epsilon)$ as $\theta_2$ and we get the derivative of $M(\lambda)$ as:
\begin{equation}
\displaystyle\frac{d M} {d \lambda}=\theta^{\frac{\lambda}{\lambda-1}}_2\left(\log(\lambda)+\frac{1}{\ln2}-\ln(\theta_2)\frac{\lambda\log(\lambda)}{(\lambda-1)^2}\right)
  \end{equation}
\newcommand{\sgn}{\operatorname{sgn}}
Let $M_s(\lambda)=\log(\lambda)+\frac{1}{\ln2}-\ln(\theta_2)\frac{\lambda\log(\lambda)}{(\lambda-1)^2}$. Since $\theta^{\frac{\lambda}{\lambda-1}}_2>0$, we have $\sgn (M_s)=\sgn \left(\displaystyle\frac{d M} {d \lambda}\right)$.
We have
\begin{equation}
\displaystyle\frac{d M_s} {d \lambda}=\frac{1}{\lambda}+\frac{1-\lambda+\log(\lambda)+\lambda\log(\lambda)}{(\lambda-1)^3}>0, \qquad \forall \lambda\geq2
  \end{equation}
 $M_s(2)=1+\frac{1}{\ln2}-2\ln(\theta_2)<0$ given $\epsilon<\frac{1}{2}$, and $\lim_{\lambda\to\infty} \displaystyle M_s(\lambda)=\infty$. It is clear that there exist a $\lambda_{opt}$ determined by $\theta_2$ such that $\sgn(M_s(\lambda))=-1$ on $[2,\lambda_{opt})$ and $\sgn(M_s(\lambda))=1$ on $(\lambda_{opt},\infty)$. Remember that $\sgn (M_s)=\sgn \left(\displaystyle\frac{d M} {d \lambda}\right)$, then $\lambda = \lambda_{opt}$ is the one and only one local minimum of $M(\lambda)$ on $[2,\infty)$.

\end{document}